\documentclass{article}

\usepackage{microtype}
\usepackage{graphicx}
\usepackage{subcaption}
\usepackage{booktabs} % for professional tables

\usepackage{hyperref}

 \usepackage[preprint]{icml2026}

\usepackage[table]{xcolor}
\usepackage{amsmath}
\usepackage{amssymb}
\usepackage{mathtools}
\usepackage{amsthm}
\usepackage{longtable}
\usepackage{booktabs} 
\usepackage{multirow}
\usepackage{algorithm}
\usepackage{algorithmic}

\usepackage[capitalize,noabbrev]{cleveref}

\theoremstyle{plain}
\newtheorem{theorem}{Theorem}[section]

\theoremstyle{definition}
\newtheorem{definition}[theorem]{Definition}

\theoremstyle{remark}

\usepackage[textsize=tiny]{todonotes}

\icmltitlerunning{Stabilized Fine-Tuning with LoRA via Scaling Factor}

\begin{document}

\twocolumn[
  \icmltitle{\texorpdfstring{Stabilized Fine-Tuning with LoRA in Federated Learning: \\ Mitigating the Side Effect of Client Size and Rank via the Scaling Factor}{Stabilized Fine-Tuning with LoRA in Federated Learning: Mitigating the Side Effect of Client Size and Rank via the Scaling Factor}}

  % It is OKAY to include author information, even for blind submissions: the
  % style file will automatically remove it for you unless you've provided
  % the [accepted] option to the icml2026 package.

  % List of affiliations: The first argument should be a (short) identifier you
  % will use later to specify author affiliations Academic affiliations
  % should list Department, University, City, Region, Country Industry
  % affiliations should list Company, City, Region, Country

  % You can specify symbols, otherwise they are numbered in order. Ideally, you
  % should not use this facility. Affiliations will be numbered in order of
  % appearance and this is the preferred way.
% 如果需要标注等效贡献（Equal Contribution），可以保留此行并在 author 后面加 {equal,bupt}
% \icmlsetsymbol{equal}{*}
    
    \begin{icmlauthorlist}
        \icmlauthor{Jiayu Huang}{bupt}
        \icmlauthor{Xiaohu Wu}{bupt}
        \icmlauthor{Tiantian He}{astar}
        \icmlauthor{Qicheng Lao}{bupt}
    \end{icmlauthorlist}
    
    % 机构定义（已删除 Mila）
    \icmlaffiliation{bupt}{Beijing University of Posts and Telecommunications, Beijing, China}
    \icmlaffiliation{astar}{Agency for Science, Technology and Research, Singapore, Singapore}
    
    % 设置 Xiaohu Wu 为通讯作者
    \icmlcorrespondingauthor{Xiaohu Wu}{xiaohu.wu@bupt.edu.cn}

  % You may provide any keywords that you find helpful for describing your
  % paper; these are used to populate the "keywords" metadata in the PDF but
  % will not be shown in the document
  \icmlkeywords{Machine Learning, ICML}

  \vskip 0.3in
]

% this must go after the closing bracket ] following \twocolumn[ ...

% This command actually creates the footnote in the first column listing the
% affiliations and the copyright notice. The command takes one argument, which
% is text to display at the start of the footnote. The \icmlEqualContribution
% command is standard text for equal contribution. Remove it (just {}) if you
% do not need this facility.

% Use ONE of the following lines. DO NOT remove the command.
% If you have no special notice, KEEP empty braces:
\printAffiliationsAndNotice{}  % no special notice (required even if empty)
% Or, if applicable, use the standard equal contribution text:
% \printAffiliationsAndNotice{\icmlEqualContribution}

\begin{abstract}
    Large Language Models (LLMs) are pivotal in natural language processing. The impracticality of full fine-tuning has prompted Parameter-Efficient Fine-Tuning (PEFT) methods like Low-Rank Adaptation (LoRA), optimizing low-rank matrices $A$ and $B$. In distributed scenarios where privacy constraints necessitate Federated Learning (FL), however, the integration of LoRA is often unstable. Specifically, we identify that aggregating updates from multiple clients introduces statistical variance that scales with the client count, causing gradient collapse when using high-rank adapters. Existing scaling factor candidates, such as the one used by Rank-Stabilized LoRA, ignore the interaction caused by the aggregation process. To bridge this gap, this paper introduces Stabilized Federated LoRA (SFed-LoRA), a framework that theoretically characterizes the interaction between adapter rank and federated aggregation. We derive an optimal scaling factor, $\gamma_z = \alpha\sqrt{N/r}$, designed to effectively mitigate the aggregation error accumulating across $N$ clients. By correcting the scaling mismatch inherent in previous approaches, SFed-LoRA restores the efficacy of high-rank adaptation without altering the original model architecture or increasing inference latency. Extensive experiments in diverse tasks, model architectures, and heterogeneous data distributions are conducted to validate our results. We demonstrate that SFed-LoRA prevents high-rank collapse, and achieves significantly improved stability and faster convergence compared with state-of-the-art baselines for high-rank adaptation. Code is available in the Appendix.
\end{abstract}

\begin{figure*}[htbp]
    \centering
    \includegraphics[width=0.9\textwidth]{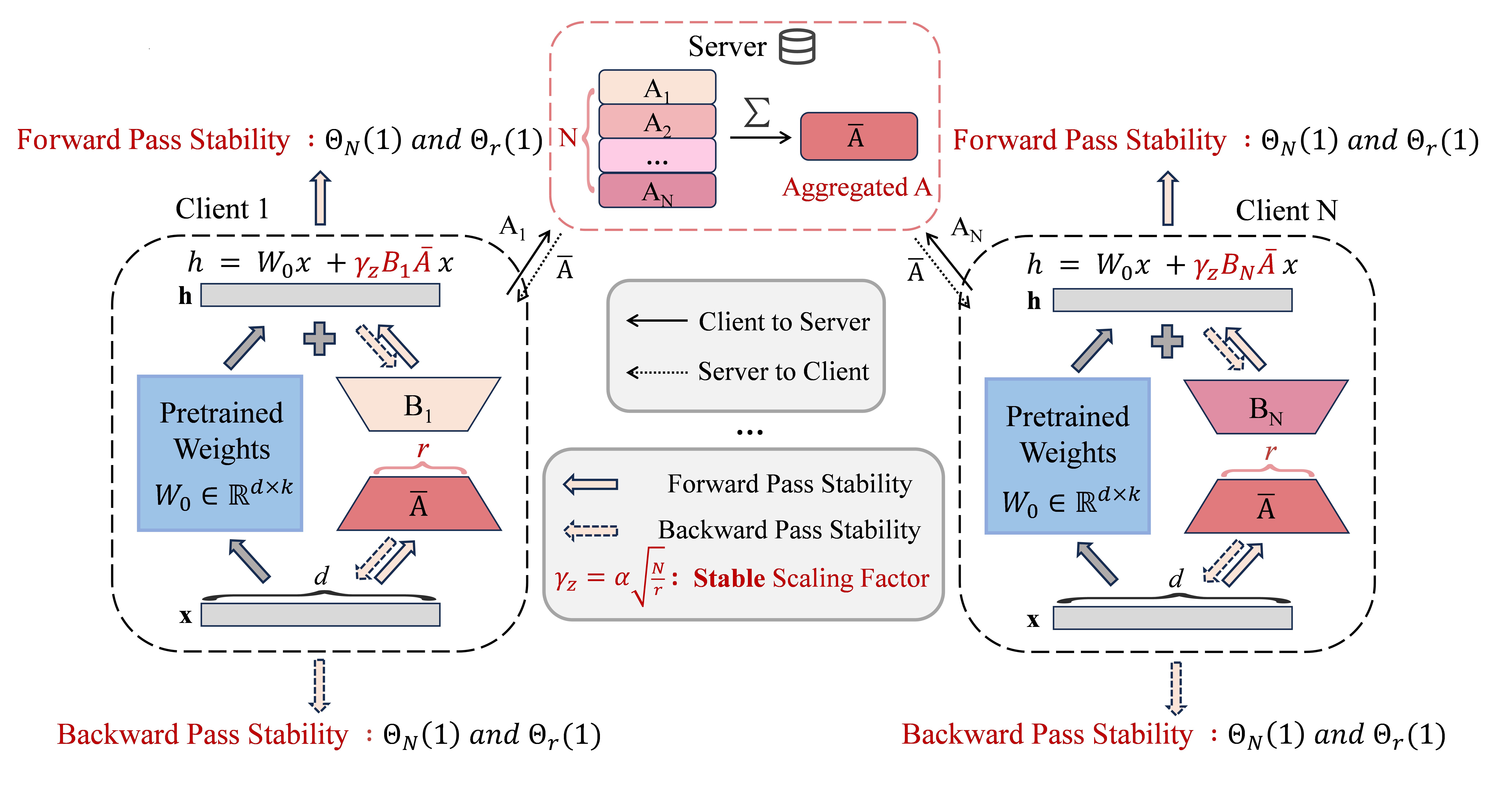} % Adjusted width to avoid overfull boxes
    \caption{\textbf{The framework of SFed-LoRA.} It adopts a split aggregation strategy where clients upload only matrix $A$ while maintaining $B$ locally to protect privacy. A novel scaling factor $\gamma_z = \alpha\sqrt{N/r}$ is integrated into the local computation (as shown in the equation) to counteract aggregation interference, ensuring stable training performance across varying client numbers.}
    \label{fig1}
\end{figure*}

\section{Introduction}

% Paragraph 1: Background (Establishes 'r')
Large Language Models (LLMs), often referred to as Pre-trained Language Models (PLMs), have become central to natural language processing but necessitate efficient adaptation strategies to align with specific downstream tasks~\citep{llama,gpt-4}. Full fine-tuning provides the largest adaptation scope; however, its computational and memory demands are prohibitively high, hindering deployment of such models in many practical applications~\citep{parameter}. To mitigate this, Low-Rank Adaptation (LoRA) has emerged as a premier Parameter-Efficient Fine-Tuning (PEFT) method. By freezing the pre-trained backbone and optimizing the product of two low-rank matrices \(A \in \mathbb{R}^{r \times k}\) and \(B \in \mathbb{R}^{d \times r}\) (where the rank \(r \ll \min(d, k)\)), LoRA significantly reduces the parameter budget while preserving model performance~\citep{lora}. Comprehensive model fine-tuning generally requires massive centralized datasets, which may not translate into feasible real-world deployments, as most large language model training datasets are dispersed among institutes under strict data privacy constraints. Federated Learning (FL) resolves this tension by enabling collaborative, decentralized model training across organizations without requiring raw data exchange~\citep{mcmahan}. Recognizing this synergy, recent frameworks such as FedIT~\citep{fedit}, FFA-LoRA~\citep{ffa}, and FedSA-LoRA~\citep{fedsa} have demonstrated promising progress in integrating LoRA with federated protocols.

% Paragraph 2: Problem Statement (Establishes 'alpha' and 'N')
Despite these advancements, a critical instability persists in LoRA-based federated fine-tuning, highlighting a significant theoretical deficiency in current research. Traditionally, LoRA formulates an adaptation scaling factor \(\gamma = \alpha/r\), using a constant hyperparameter \(\alpha\) to regulate the amount of adapted updates, resulting in gradient-collapse at high rank and restricting the effectiveness of adaptation to low-rank ones. While Rank-stabilized LoRA (rsLoRA) addresses this in standalone settings via \(\gamma_r = \alpha/\sqrt{r}\)~\citep{rslora}—corroborating the assertion by \citet{parameter} that larger ranks should yield superior performance—it fails to account for the federated context. Specifically, the aggregation process involves summing updates from multiple clients, introducing statistical variance that inherently scales with the participating client count \(N\). This accumulation of variance creates interference that compromises the stability established by rsLoRA. To systematically investigate this phenomenon in the backdrop of the proliferation of FL-LoRA frameworks~\citep{wu2025survey,ren2025advances}, and address our fundamental questions about LoRA in the federated setting, we settle on the framework of FedSA-LoRA~\citep{fedsa} due to its design choice: it aggregates only the \(A\) matrices while excluding \(B\), effectively isolating the source of aggregation error and supporting a precise examination of how the scaling factor influences stability within federated settings.

% Paragraph 3: The Proposal (Synthesizes r, alpha, N)
In this paper, we propose \textbf{Stabilized Federated LoRA (SFed-LoRA)}, a novel framework designed to counteract this aggregation-induced instability and unleash the full potential of the high-rank adaptation in federated learning. SFed-LoRA precisely captures the specific role of the aggregating matrix \(A\) in training dynamics, and obtains a theoretical optimal scaling factor \(\gamma_z = \alpha\sqrt{N/r}\), where \(z = (N, r)\). This formulation extends the principles of rsLoRA by explicitly integrating the client count \(N\) to normalize the variance accumulation identified above. As depicted in Figure~\ref{fig1}, SFed-LoRA integrates this scaling mechanism while preserving the original LoRA adapter structure, thereby incurring no additional inference latency. Comprehensive experiments on benchmark datasets, including GSM8K~\citep{gsm8k} and GLUE~\citep{glue}, using models such as LLaMA 2~\citep{llama} and RoBERTa-large~\citep{roberta}, reveal that SFed-LoRA can remarkably suppress gradient collapse and aid stable training at higher ranks. Consequently, it notably outperforms both standard LoRA and rsLoRA in terms of stability and convergence rate, leveraging \(\gamma_z\) to robustly mitigate aggregation effects.

% Paragraph 4: Contributions
Our contributions are summarized as follows:
\begin{itemize}
    \item We provide a theoretical derivation proving that \(\gamma_z = \alpha\sqrt{N/r}\) constitutes the optimal scaling factor for ensuring rank stability and consistent gradient norms in federated LoRA fine-tuning.
    \item We design and introduce SFed-LoRA, which successfully alleviates the adverse effects of federated aggregation and overcomes performance difficulties when we lower the rank to guarantee stable and efficient training.
    \item We present extensive experimental evidence, demonstrating SFed-LoRA's outstanding advantages over both standard LoRA and rsLoRA, corroborating its efficacy for LLM fine-tuning in various federated-learning scenarios.
\end{itemize}

\section{Background and Related Works}

We firstly overview the concept of fine-tuning with LoRA and its combination into federated learning, elucidating the significance of the scaling factor in addressing stability challenges. Then we recap the analytical methods in this work. We base our study on established frameworks \cite{yang2020feature,rslora}, which provide the theoretical underpinnings for our investigations.

\subsection{Federated LoRA (Fed-LoRA)}

\subsubsection{LoRA}

LoRA freezes pre-trained model weights and injects trainable low-rank matrices into each layer. Specifically, regarding the weight matrix $W_0 \in \mathbb{R}^{d \times k}$, LoRA introduces $A \in \mathbb{R}^{r \times k}$ and $B \in \mathbb{R}^{d \times r}$ (where $r \ll \min(d, k)$), reparameterizing the forward pass as:
\begin{equation}
h = W_0 x + \gamma BAx,
\end{equation}
where $A$ is initialized with i.i.d. Gaussian entries and $B$ as a zero matrix. The scaling factor $\gamma$, conventionally set to $\alpha/r$, aims to normalize the update magnitude across ranks. While LoRA improves parameter efficiency, its standard scaling induces gradient instability at higher ranks in federated settings—a limitation SFed-LoRA addresses through a theoretically derived scaling factor.

\subsubsection{Fed-LoRA}

Fed-LoRA integrates Low-Rank Adaptation (LoRA) into federated learning. This paradigm facilitates efficient and private fine-tuning of LLM on dispersed clients. Federated learning was pioneered by \citet{mcmahan} through FedAvg, promoting collaborative training without sharing raw data. The initial application of LoRA in this context was advanced by \citet{fedit}, who proposed FedIT, integrating LoRA with FedAvg for instruction tuning by aggregating both $A$ and $B$ matrices.

Subsequently, numerous works have enhanced this initial design. \citet{ffa} introduced FFA-LoRA, which performs the same architecture as FedIT but freezes $A$ matrices to decrease the communication overhead. To diminish the constraint on expressiveness limitations brought by the freezing projections, \citet{rolora} proposed RoLoRA, employing alternating optimization of $A$ and $B$ to alleviate the interference caused by aggregation. For heterogeneous settings, \citet{flora} presented FLoRA with adaptive stacked LoRA blocks, while \citet{flex} proposed FlexLoRA to dynamically rescale local ranks. Additionally, \citet{duap} developed FedDPA, utilizing dual adapters to handle test-time distribution shifts. These works have significantly advanced federated LoRA methods.

\citet{fedsa} introduced FedSA-LoRA, which only aggregates $A$ matrices but keeps $B$ as a local adaptive matrix to eliminate FedIT's unavoidable aggregation errors. This design enhances performance over FFA-LoRA due to the freedom from the fixed initialization constraint. These features lay out a solid foundation for our SFed-LoRA, which is an extension of FedSA-LoRA with a novel scaling factor $\gamma_z$ to further improve rank stability in federated settings.

\subsubsection{The Role of the Scaling Factor}

The scaling factor \(\gamma\) modulates the magnitude of the matrix product \(BA\), serving as a cornerstone for training stability. The original LoRA formulation \citep{lora} sets \(\gamma = \alpha/r\). While effective for low ranks, this scaling excessively dampens updates as \(r\) increases, causing gradient collapse at higher ranks.

\citet{rslora} addressed this by introducing an evolved factor \(\gamma_r = \alpha/\sqrt{r}\) in rsLoRA that standardizes magnitudes while restoring the performance for high-rank standalone training. However, this derivation ignores the weight aggregation process under Federated Learning. As client updates are averaged, the variance of the aggregated adapter, therefore, shifts across clients and the single-client scaling factor is no longer optimal in the distributed setting.

SFed-LoRA fills this gap by proposing a novel federated scaling factor \(\gamma_z = \alpha\sqrt{\frac{N}{r}}\) that theoretically incorporates client count \(N\), which in theory offsets the reduced variance induced by aggregation and guarantees the consistency of the gradient norms across clients and ranks, allowing for high-rank LLM fine-tuning under the federated setting.

\subsection{The Used Analytical Method}

Our theoretical analysis is grounded in the feature learning framework of the infinite-width limit \cite{yang2020feature}, which evaluates the impact of parameterization on model stability. \cite{yang2020feature} introduced the $abc$-parametrization to characterize learning dynamics in fully-connected networks, defining weights as $W^l = d^{-a_l} W^l_{i,j}$ with entries $W^l_{i,j} \sim \mathcal{N}(0, d^{-b_l})$ and learning rate $\eta d^{-c}$. Their analysis proves that standard schemes (where $a_l = c = 0$) fail to sustain stable feature learning in high dimensions. Instead, they derive stable configurations (e.g., $b_l = 1, c = 0$) that ensure non-collapsing training dynamics.

\citet{rslora} adapted this trajectory analysis to LoRA, optimizing the scaling factor to stabilize rank variations in standalone training. We extend this methodology to Federated Learning by incorporating aggregation dynamics into the learning trajectory analysis. This allows us to derive the federated-optimal scaling factor $\gamma_z = \alpha\sqrt{N/r}$, ensuring that SFed-LoRA's stability guarantees are rigorously rooted in established infinite-width theory.

\section{The FedSA-LoRA Framework}

Consider a federated learning system comprising a central server and $N$ clients. Each client $i$ possesses a private dataset and maintains two low-rank adapter matrices: $A_{i}^{(t)} \in \mathbb{R}^{r \times k}$ and $B_{i}^{(t)} \in \mathbb{R}^{d \times r}$.

To preserve the pre-trained model's capabilities at the onset, initialization follows the standard LoRA protocol:
\begin{itemize}
    \item $B_{i}^{(0)}$ is initialized as a zero matrix.
    \item $A_{i}^{(0)}$ is initialized with random Gaussian entries $a_{pq} \sim \mathcal{N}(0, \sigma^2)$.
\end{itemize}
Consequently, $\Delta W_{i}^{(0)} = B_{i}^{(0)} A_{i}^{(0)} = 0$, ensuring the initial model $W_{i}^{(0)} = W_0 + \Delta W_{i}^{(0)}$ remains identical to the pre-trained weights $W_0$.

FedSA-LoRA \citep{fedsa} is predicated on the hypothesis that the down-projection $A_{i}$ encodes \emph{general knowledge}, whereas the up-projection $B_{i}$ captures \emph{client-specific knowledge}. Accordingly, the framework splits the aggregation process during the $t$-th round:

\begin{enumerate}
    \item \textbf{Local Training:} Each client $i$ updates both $A_i^{(t)}$ and $B_i^{(t)}$ on their local dataset.
    \item \textbf{Selective Upload:} Clients upload only $A_i^{(t)}$ to the server, keeping $B_i^{(t)}$ local.
    \item \textbf{Aggregation:} The server broadcasts the global average $\bar{A}^{(t+1)} = \frac{1}{N} \sum_{i=1}^N A_i^{(t)}$.
    \item \textbf{Local Update:} Clients update their model as $W_i^{(t+1)} = W_0 + B_i^{(t)} \bar{A}^{(t+1)}$.
\end{enumerate}

This selective strategy offers a critical theoretical advantage. Since the average of matrix products generally deviates from the product of averages (i.e., $\frac{1}{N}\sum B_i A_i \neq (\frac{1}{N}\sum B_i)(\frac{1}{N}\sum A_i)$), aggregating both matrices introduces inherent approximation errors. By maintaining $B_i$ locally, FedSA-LoRA circumvents this algebraic inconsistency, providing a precise and stable foundation for our subsequent scaling factor analysis.

%FedSA-LoRA is extended to other LoRA variants:
%\begin{itemize}
%    \item \textbf{FedSA-rsLoRA:} Incorporates a rank-stabilized scaling factor $\alpha$.
%    \item \textbf{FedSA-VeRA:} Applies to vector-based random matrix adaptation with shared $A_d$ and local $B_b$.
%\end{itemize}

%\subsection*{Formal Model Update}

%Each client's final update after receiving the aggregated matrix is:
%\[
%W_i^{(t+1)} = W_0 + B_i^{(t)} \cdot \bar{A}^{(t)},
%\]
%where $\bar{A}^{(t)}$ is the aggregated matrix at round $t$, and $B_i^{(t)}$ is trained locally.

\section{SFed-LoRA: Stabilized Adapters in Federated Aggregation}

In the SFed-LoRA framework, the scaling factor $\gamma_z$ applied to the matrix product $BA$ is pivotal for maintaining training stability across federated settings. To formalize this requirement, we introduce the concept of a federated-stabilized adapter:

\begin{definition}[\textbf{$(N, r)$-Federated-Stabilized Adapter}]\label{def:federated_stabilized}
Let $z = (N, r)$ denote the tuple comprising client count $N$ and rank $r$. An adapter $\gamma_z BA$ is defined as $(N, r)$-federated-stabilized if it satisfies the following conditions:
\begin{enumerate}
    \item \textbf{Forward Stability:} Given inputs with $h$-th moments scaling as $\Theta_N(1)$ and $\Theta_r(1)$ (per entry), the adapter's output moments must also scale as $\Theta_N(1)$ and $\Theta_r(1)$.
    \item \textbf{Backward Stability:} Given loss gradients with respect to the adapter's outputs scaling as $\Theta_N(1)$ and $\Theta_r(1)$ (per entry), the backpropagated gradients to the inputs must maintain magnitudes of $\Theta_N(1)$ and $\Theta_r(1)$.
\end{enumerate}
\end{definition}

Through asymptotic analysis as $r \to \infty$, we identify that the unique configuration of $\gamma_z$ (up to an additive constant) satisfying Definition \ref{def:federated_stabilized} is:
\begin{equation}
    \gamma_z = \alpha\sqrt{\frac{N}{r}},
\label{eq:gamma_z}
\end{equation}
where $\alpha$ is a hyperparameter. This scaling factor effectively balances the statistical properties of client aggregation ($N$) and rank expansion ($r$), thereby underpinning the stability of the learning process.

\paragraph{Physical Significance.}
The $(N, r)$-federated-stabilized definition ensures that the adapter acts as a \textbf{variance-preserving transformation}, maintaining consistent statistical properties of input data—specifically the $h$-th moment—across variations in client count $N$ and rank $r$. Physically, this stability prevents the signal attenuation (gradient collapse) or amplification (explosion) that is typically exacerbated by the aggregation of $N$ local updates. As illustrated in Figure~\ref{fig2}, suboptimal scaling (e.g., $\gamma = \alpha/r$) leads to \textbf{stagnant convergence} at higher ranks, rendering the additional parameters ineffective. By explicitly compensating for aggregation dynamics, $\gamma_z$ extends the numerical stability—originally observed in standalone rsLoRA—to federated environments. This ensures that high-rank adapters remain trainable and effective even under distributed aggregation, enabling the utilization of their full expressive capacity without incurring inference latency (as adapters are merged into $W_0$ post-training).

Through asymptotic analysis as $r \to \infty$, we derive the unique configuration for $\gamma_z$ that satisfies this stability criterion. The following theorem formalizes this condition:

\begin{theorem}[Optimal Federated Scaling Factor]\label{thm:SFedLoRA_stability}
Consider a federated learning ecosystem with $N$ clients and LoRA adapters scaled by $\gamma_z \in \mathbb{R}$, where $z=(N, r)$. Let the rank $r \to \infty$ and $\gamma_z \to 0$. The adapters are $(N, r)$-federated-stabilized (per Definition \ref{def:federated_stabilized}) if and only if:
\begin{equation}
    \gamma_z = \Theta_z\left(\sqrt{\frac{N}{r}}\right).
\label{eq:gamma_z_theorem}
\end{equation}
In particular, unless $\gamma_z$ scales according to \eqref{eq:gamma_z_theorem}, the learning process exhibits instability or gradient collapse for sufficiently large $r$. This condition holds uniformly throughout the learning trajectory.
\end{theorem}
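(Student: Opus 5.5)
The plan is to adapt the learning-trajectory argument of \citet{rslora} to the FedSA-LoRA update rule. Conditions are stated per round, and the forward and backward requirements of Definition~\ref{def:federated_stabilized} will be turned into explicit asymptotic expressions in $(N,r)$. The key modelling step is what aggregation does to $A$. After local training, each client's $A_i^{(t)}$ is the shared initialization plus a client-specific deviation. For the initialization/fluctuation component, we model these deviations as approximately independent across clients with per-entry variance $\Theta(1)$ in the relevant scale. Averaging then gives $\bar A^{(t+1)} = \frac{1}{N}\sum_i A_i^{(t)}$, whose fluctuation part has per-entry second moment scaling as $1/N$. So the $r$ rows of $\bar A$ have squared norms of order $r/N$ in the direction seen by $B_i$.

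We then track one round of gradient descent starting from $B_i=0$. Write $\nabla_{h}\mathcal{L}=v$ with $\Theta(1)$ entries and input $x$ with $\Theta(1)$ moments. The gradients are $\nabla_{B_i}=\gamma_z v(\bar A x)^\top$ and $\nabla_{A}=\gamma_z B_i^\top v x^\top$. After $n$ steps with a learning rate independent of $(N,r)$, the leading term of the learned adapter output is
\[
\gamma_z B_i \bar A x \approx -\eta\,\gamma_z^2\, v\, (\bar A x)^\top(\bar A x).
\]
Here $(\bar A x)^\top(\bar A x)=\|\bar A x\|^2$ is a sum of $r$ terms, each with mean $\Theta(1/N)$. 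By the law of large numbers as $r\to\infty$ it concentrates at $\Theta_z(r/N)$. The forward output therefore scales as $\Theta_z(\gamma_z^2 r/N)$.

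The backward gradient $\gamma_z \bar A^\top B_i^\top v$ gives the same quantity $\gamma_z^2 \bar A^\top \bar A$ acting on $v$, up to $\Theta(1)$ factors. Requiring $\Theta_N(1)$ and $\Theta_r(1)$ in both directions gives $\gamma_z^2 r/N=\Theta_z(1)$, which is exactly \eqref{eq:gamma_z_theorem}. This proves sufficiency. For necessity, if $\gamma_z^2 r/N\to 0$ the update vanishes (collapse), and if it tends to $\infty$ the output and the backpropagated gradient blow up. This covers the claim about instability for large $r$. The assumption $\gamma_z\to 0$ is used to show that higher-order terms in $\gamma_z$ are negligible: products like $\gamma_z^3 B^\top\cdots$, and the drift of $A$ itself, are at most $O(\gamma_z^2 r/N)\cdot O(\gamma_z)$ relative corrections.

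To obtain "uniformly throughout the learning trajectory", we induct over steps and rounds. The hypothesis is that after $t$ rounds, $B_i^{(t)}$ has entries of order $\gamma_z\,\|\bar A x\|$-weighted size and $\bar A^{(t)}$ retains the $1/N$ variance structure. The induction step re-applies the computation above, using that the update to $A_i$ is $O(\gamma_z)$ and therefore only perturbs $\bar A^\top \bar A$ by lower-order terms. Finitely many rounds and steps keep all constants independent of $(N,r)$.

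The main obstacle is justifying the variance model for $\bar A$. With a common initialization, the clients' $A_i$ are highly correlated, and the shared part does not shrink like $1/N$. I would first handle this by decomposing $A_i = A^{(0)} + \Delta_i$. I would then argue that the stabilization condition is governed by the aggregated client-specific component, treating the $\Delta_i$ as independent mean-zero contributions (as is implicit in the paper's variance-accumulation claim). The conclusion would then be stated explicitly under that assumption.
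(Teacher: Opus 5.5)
You have the same trajectory computation as the paper, apart from where the $1/N$ comes from: leading term $-\eta\gamma_z^2 v\, x_s^\top \bar A^\top \bar A\, x$, forward and backward both of order $\gamma_z^2 r/N$, necessity via $\gamma_z^2 r/N\to 0$ or $\to\infty$, and an induction using that the drift of $A$ is a relative $O(\gamma_z^2)$ correction. The gap is the one you flag yourself, and the repair you propose does not work.

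With a common initialization $A^{(0)}$, your own estimate makes each deviation $\Delta_i=A_i-A^{(0)}$ a relative $O(\gamma_z^2)$ correction. Hence $\bar A=A^{(0)}+\frac1N\sum_i\Delta_i$ and $\bar A^\top\bar A=(A^{(0)})^\top A^{(0)}\,(1+O(\gamma_z^2))$, whose mean is $r\sigma_A^2 I$ rather than $\frac rN\sigma_A^2 I$. The client-specific part cannot govern the stabilization condition while also being lower order. Modelling it with per-entry variance $\Theta(1)$ also contradicts the drift bound you use in the induction step. Carried through consistently, a shared initialization gives back the rsLoRA condition $\gamma_z=\Theta(1/\sqrt r)$, with no dependence on $N$.

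The missing ingredient is the paper's initialization assumption. Each client draws its own $A_i^{(0)}$, with i.i.d.\ mean-zero entries of variance $\sigma_A^2$, independently across clients. The factor $1/N$ then sits in the zeroth-order term, not in the training drift:
\begin{itemize}
\item after the first aggregation, $A_i^{(n)}=\bar A\,(1+O_r(\gamma_z^2))$ with $\bar A=\frac1N\sum_j A_j^{(0)}$;
\item $B_i^{(n)}$ contains $v_{i,0}x_{i,0}^\top(A_i^{(0)})^\top$ from the pre-aggregation step, plus terms in $\bar A^\top$;
\item so the adapter output involves $(A_i^{(0)})^\top\bar A$ and $\bar A^\top\bar A$, both of expectation $\frac rN\sigma_A^2 I$ by independence across clients.
\end{itemize}
With this in place, your forward, backward and necessity arguments go through as written.

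One further point for the induction. The paper identifies each SGD step with a communication round ($t=n$). If you allow several local steps before the first aggregation, those steps use $A_i^{(0)}$ itself and generate $(A_i^{(0)})^\top A_i^{(0)}$ terms of order $r$, not $r/N$. To keep the $N$-dependence uniform along the trajectory, you need either that convention or to start from an already aggregated $\bar A$, as your computation implicitly does.
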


Theorem \ref{thm:SFedLoRA_stability} establishes that maintaining stability across all ranks in a federated environment mandates scaling adapters with $\gamma_z \propto \sqrt{\frac{N}{r}}$, explicitly reflecting aggregation dynamics across the client size $N$. While the underlying $(N, r)$-federated-stabilized assumptions may appear complex—due to potential dependencies of gradients on $r$ via subsequent adapters—they are simplified by the independence of input data from rank $r$. This facilitates an inductive argument: forward propagation maintains stable output loss gradients, preventing unstable magnitudes as $r \to \infty$, while backpropagation preserves gradient magnitudes within $\Theta_N(1)$ and $\Theta_r(1)$, ensuring stability modulated by the client size $N$.

Guided by this theoretical foundation, the SFed-LoRA framework integrates the rank stabilization approach of \citet{rslora} with the federated aggregation strategy of \citet{fedsa}. The trainable parameters $A_i$ and $B_i$ for each client $i$ evolve based on $\gamma_z$, input $x_n$, and the updated matrices $A_i^{(n)}$ and $B_i^{(n)}$. Appendix A provides a rigorous analysis of this evolution, confirming that $\gamma_z \propto \sqrt{\frac{N}{r}}$ effectively mitigates aggregation-induced instability, aligning with Theorem~\ref{thm:SFedLoRA_stability}.

It is important to note that Theorem~\ref{thm:SFedLoRA_stability} primarily addresses numerical stability and collapse prevention for large $r$, without explicitly modeling feature quality variations across ranks. Since gradient magnitudes may also be influenced by feature quality, deviations from these assumptions could occur if feature quality varies significantly with client size $N$ or rank $r$. Furthermore, if feature quality remains independent, increasing $N$ or $r$ might not justify the associated resource costs. This interplay necessitates the further experimental validation presented in subsequent sections.

\begin{figure*}[t]
    \centering
    \includegraphics[width=0.9\textwidth]{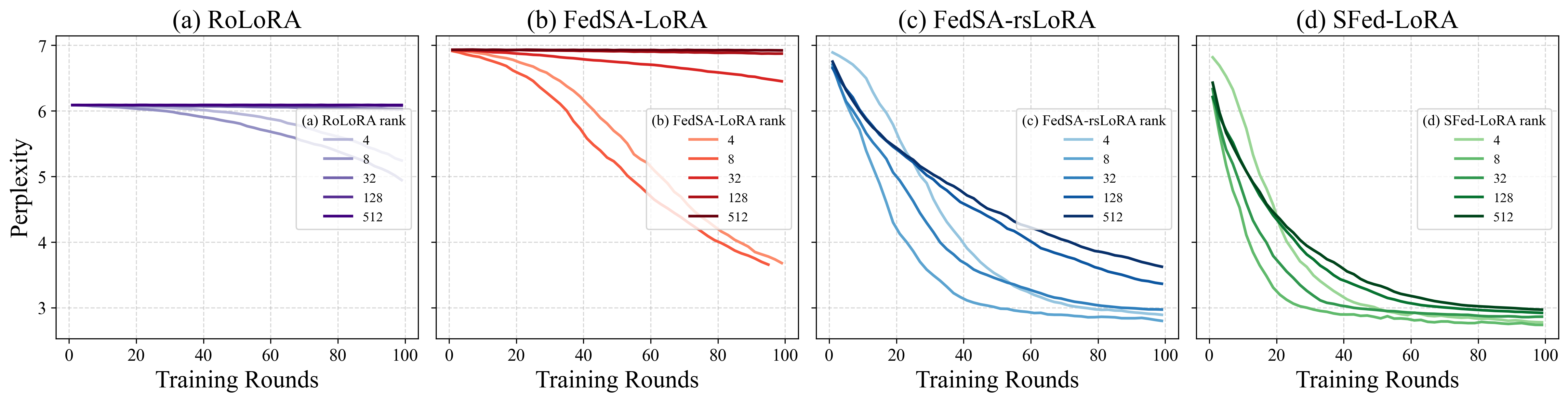}
    \caption{Convergence of Perplexity (PPL) on the Alpaca dataset using the LLaMA2-7B model under an IID federated learning setting. The subplots compare the training trajectories of four methods across ranks \( r \in \{4, 8, 32, 128, 512\} \): (a) RoLoRA (purple), (b) FedSA-LoRA (copper), (c) FedSA-rsLoRA (blue), and (d) SFed-LoRA (green). Darker curves correspond to higher ranks. The figure displays the evolution of validation perplexity over 100 communication rounds.}
    \label{fig2}
\end{figure*}

\begin{figure*}[t]
    \centering
    \includegraphics[width=0.92\textwidth]{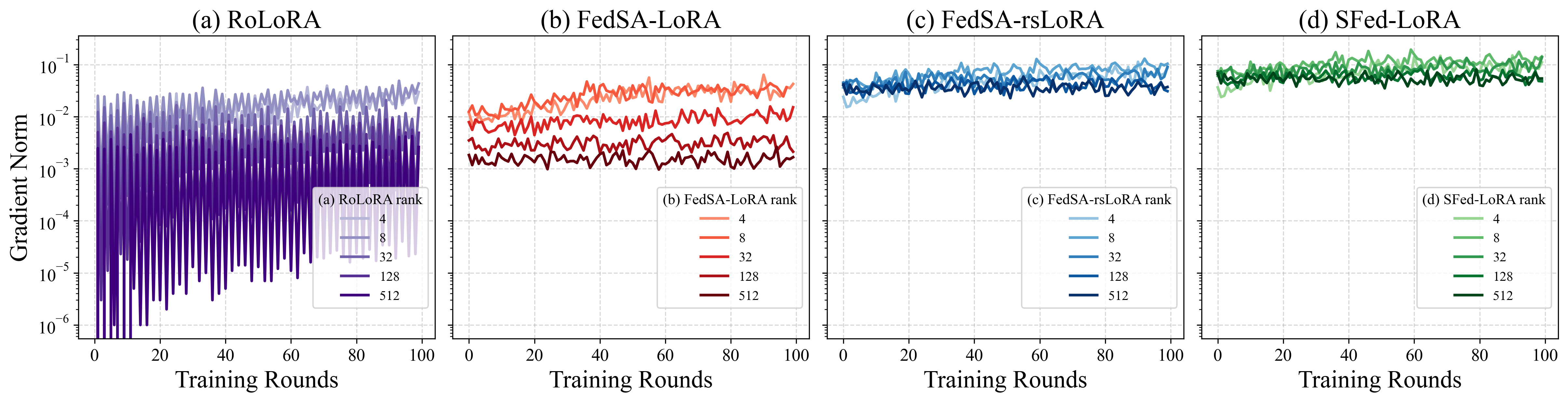}
    \caption{Evolution of average parameter gradient norms on the Alpaca dataset (IID). The subplots display the training trajectories for (a) RoLoRA (purple), (b) FedSA-LoRA (copper), (c) FedSA-rsLoRA (blue), and (d) SFed-LoRA (green) across ranks \( r \in \{4, 8, 32, 128, 512\} \). Darker curves correspond to higher ranks.}
    \label{fig3}
\end{figure*}

\section{Experimental Results}

This section validates the effectiveness of SFed-LoRA in mitigating stability issues arising from client aggregation and rank variations. Leveraging the theoretical scaling factor \(\gamma_z \propto \sqrt{N/r}\), we run evaluations over three dimensions: (1) training stability and performance consistency across ranks; (2) robustness of performance across sizes of client \(N\); and (3) generalization over diverse tasks and data distributed in heterogeneous conditions. In particular, SFed-LoRA is compared against three baselines: FedSA-LoRA (using standard scaling \(\alpha/r\)), FedSA-rsLoRA (using evolved scaling factor \(\alpha/\sqrt{r}\)), and the alternating optimization framework RoLoRA \cite{rolora}. The results show a superior outcome in terms of convergence and stability over high-rank regimes for SFed-LoRA.

\subsection{Stability with a Fixed Client Size}

\paragraph{Configuration.}
We performed experiments with an existing pre-trained LLaMA2-7B-hf model~\citep{llama} on Alpaca dataset~\citep{alpaca}, a standard benchmark for instruction following. To simulate a federated environment, we utilized the FederatedScope-LLM framework~\citep{federatedscope}, where we split the dataset into $3$ clients with identically and independently distributed (IID) configuration following the training protocol in~\citep{fedsa}. The training process spanned 100 communication rounds, with each client performing 10 local update steps per round using the Stochastic Gradient Descent (SGD) optimizer~\citep{SGD}. Consistent with standard LoRA implementations~\citep{lora}, adapter modules were applied to the query (\(\texttt{W}_{\text{q}}\)) and value (\(\texttt{W}_{\text{v}}\)) attention matrices. To systematically evaluate stability across varying model capacities, we swept the rank parameter \( r \in \{4, 8, 32, 128, 512\} \), while fixing the scaling hyperparameter at $\alpha = 8$. The learning rate was set at $0.005$ for fairness across all the baselines (FedSA-LoRA, FedSA-rsLoRA, and RoLoRA). All experiments were executed in half-precision (FP16) on NVIDIA GeForce RTX vGPU-32GB hardware.
 
\paragraph{Results.} Figure~\ref{fig2} shows the perplexity evolution over all communication rounds, with subplots for RoLoRA, FedSA-LoRA, FedSA-rsLoRA, and SFed-LoRA, respectively, where the darker lines point to higher ranks. Consistent with~\citep{parameter}, higher ranks should give a better model performance. However, both FedSA-LoRA (copper) and RoLoRA (purple) have considerable challenges in convergence at higher ranks (e.g., 512), where curves show convergence stagnation due to optimization interference or suboptimal scaling. While FedSA-rsLoRA (blue) mitigates this collapse, it still shows a slow convergence lag at higher ranks. Compared with these baselines, SFed-LoRA’s green curves have the superior stability, achieving the most rapid convergence and consistently lowest perplexity across all ranks. A significant reason behind this is the federated scaling factor \(\gamma_z \propto \sqrt{N/r}\), which could dynamically offset the shifts of variance caused by client aggregation ($N$) and rank expansion ($r$), which thus confirms the robustness of SFed-LoRA in distributed learning contexts.

\paragraph{Gradient Norm Analysis.} Figure~\ref{fig3} presents the average parameter gradient norm across training iterations, with color gradients reflecting rank magnitudes (darker lines denote higher ranks). Observing the baselines, RoLoRA exhibits severe volatility, characterized by high-frequency oscillations where gradient norms periodically plunge to near-zero magnitudes ($< 10^{-6}$), likely due to the disjoint aggregation of alternating updates. FedSA-LoRA displays a distinct stratification where gradient norms collapse exponentially as rank increases; specifically, the gradients at $r=512$ are orders of magnitude lower than those at $r=4$, confirming that the aggressive $\alpha/r$ scaling effectively freezes high-rank parameters. While FedSA-rsLoRA narrows this disparity, it still fails to unify the scales, leaving a noticeable gap between rank extremes. In strong contrast, SFed-LoRA outperforms all baselines by maintaining consistent gradient norms tightly clustered within a stable effective range. Notably, the gradient trajectories for $r=512$ in SFed-LoRA remain strictly comparable to those of $r=4$, demonstrating that the $\gamma_z \propto \sqrt{N/r}$ factor successfully eliminates rank-induced variance and prevents the gradient collapse observed in other methods.

\subsection{Stability Across Client Sizes}

\paragraph{Configuration.}
To explicitly validate the influence of client size $N$ within the theoretical framework of Theorem 4.1, this experiment fixes the rank at a high value ($r = 512$) while systematically varying $N$, complementing the rank-variable setting in Section 5.1. We evaluated stability across varying client counts $N \in \{5, 10, 15, 20\}$, a range simulating the typical scale of cross-silo federated learning scenarios. This setup rigorously tests the algorithm's robustness and the effectiveness of the scaling factor $\gamma_z$ in increasingly distributed environments. All other experimental configurations and hyperparameters remained consistent with those outlined in Section 5.1.

\begin{figure}[ht]
    \centering
    \includegraphics[width=0.9\columnwidth]{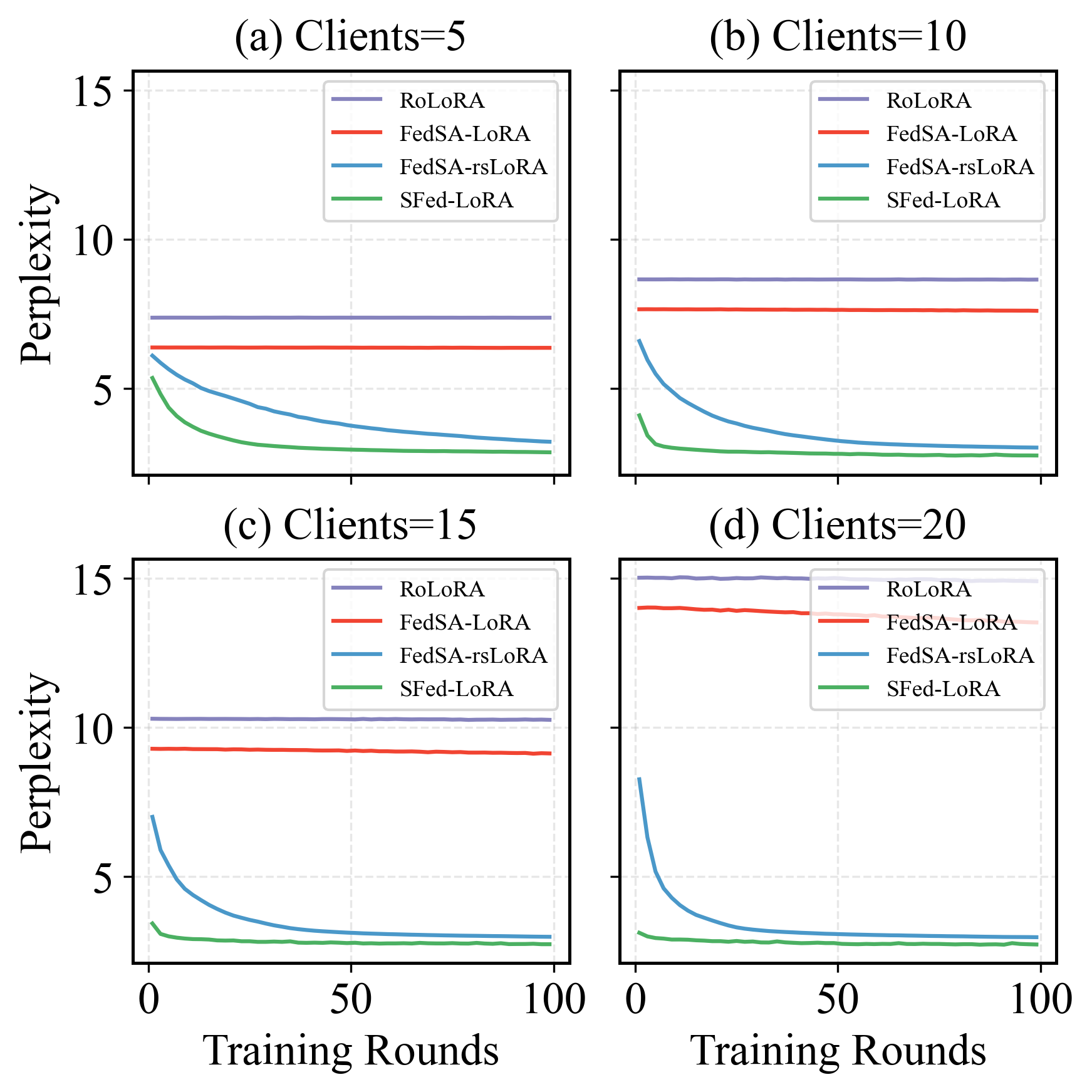}
    \caption{Comparative analysis of perplexity with a fixed rank \( r = 512 \) using the LLaMA2-7B-hf model on the Alpaca dataset in an IID federated learning setting. The subplots correspond to varying client counts: (a) \( N=5 \), (b) \( N=10 \), (c) \( N=15 \), and (d) \( N=20 \). Within each plot, the curves represent RoLoRA (purple), FedSA-LoRA (orange), FedSA-rsLoRA (blue), and SFed-LoRA (green).}
    \label{fig4}
\end{figure}

\paragraph{Results.}
Figure~\ref{fig4} provides a detailed breakdown of convergence performance across varying client sizes (\( N \in \{5, 10, 15, 20\} \)) under a fixed high-rank condition (\( r = 512 \)). The experimental outcomes reveal a critical scalability bottleneck rooted in the choice of scaling factors. Specifically, both RoLoRA (purple) and FedSA-LoRA (orange) exhibit stagnant loss curves with persistently high perplexity. This stagnation stems from their reliance on the original LoRA scaling factor $\gamma = \alpha/r$, which excessively dampens parameter updates at high ranks, effectively suppressing the model's feature learning capacity. Consequently, as the client count increases from $N=5$ to $N=20$, the baseline perplexity deteriorates significantly, jumping from approximately 7 to nearly 15, confirming that aggregating unscaled updates from more clients exacerbates instability. While FedSA-rsLoRA (blue) manages to converge by adopting the stabilized scaling $\gamma_r = \alpha/\sqrt{r}$, it consistently trails behind in performance as it neglects the aggregation dimension. In sharp contrast, SFed-LoRA (green) demonstrates remarkable invariance to client expansion. Regardless of whether \( N=5 \) or \( N=20 \), SFed-LoRA rapidly converges to the lowest perplexity state (approx. 3.0) within the first 20 rounds. This empirical evidence validates that the proposed scaling factor \(\gamma_z \propto \sqrt{N/r}\) effectively neutralizes the variance accumulation inherent in federated aggregation, ensuring consistent high-rank stability independent of the network scale.

\subsection{Generalization Across Models and Datasets}

\paragraph{Configuration.}
To strictly evaluate the generalizability of SFed-LoRA beyond standard instruction following, we expanded our experimental scope to cover diverse tasks, model architectures, optimization algorithms, and data distributions. The evaluation is conducted across two distinct scenarios:

\begin{itemize}
    \item \textbf{Task Diversity (GSM8K):} To verify the method's adaptability across different NLP domains, we extended our evaluation to the GSM8K dataset~\citep{gsm8k}, a standard benchmark for mathematical problem solving. In this setup, we retained the LLaMA2-7B architecture and the configuration from Section 5.1 (SGD optimizer, IID partition) to specifically isolate the impact of task domain variation on method performance.
    
    \item \textbf{NLU \& Heterogeneity (GLUE):} To further test robustness against architectural and distributional shifts, we adopted the GLUE benchmark~\citep{glue} for Natural Language Understanding (NLU). This setup introduces three critical variations: (1) \textbf{Architecture Shift}: Utilizing the RoBERTa-large model~\citep{roberta} (encoder-only) instead of LLaMA (decoder-only); (2) \textbf{Optimizer Shift}: Replacing SGD with the adaptive AdamW optimizer~\citep{adamw} (learning rate \(5 \times 10^{-5}\)) to verify stability under different optimization dynamics; and (3) \textbf{Data Heterogeneity}: Partitioning the dataset across three clients using a non-Identically and Independently Distributed (non-IID) configuration modeled via a Dirichlet distribution with \(\alpha = 0.5\) (\(\text{Dir}(0.5)\)), simulating realistic statistical heterogeneity.
\end{itemize}

Accuracy was selected as the primary evaluation metric for both scenarios, with all other unmentioned hyperparameters consistent with the primary experiments.

\setlength{\tabcolsep}{3pt} 
\begin{table}[h]
    \centering
    \small
    \begin{tabular}{lccc}
        \toprule
        \textbf{Method} & \multicolumn{3}{c}{\textbf{GSM8K Accuracy (\%)}} \\
        \cmidrule(lr){2-4}
        & Rank = 4 & Rank = 8 & Rank = 32 \\
        \midrule
        RoLoRA          & $14.16 \pm 1.18$ & $15.84 \pm 1.18$ & $15.00 \pm 0.00$ \\
        FedSA-LoRA      & $13.33 \pm 0.00$ & $13.89 \pm 0.96$ & $15.56 \pm 0.96$ \\
        FedSA-rsLoRA    & $14.44 \pm 0.96$ & $16.67 \pm 3.34$ & $16.11 \pm 0.96$ \\
        \rowcolor{gray!30} SFed-LoRA & \textbf{14.44} $\pm$ \textbf{1.93} & \textbf{17.08} $\pm$ \textbf{2.08} & \textbf{16.66} $\pm$ \textbf{1.67} \\
        \midrule
        & Rank = 128 & Rank = 512 & \\
        \midrule
        RoLoRA          & $14.16 \pm 1.18$ & $16.66 \pm 2.35$ & \\
        FedSA-LoRA      & $15.56 \pm 0.96$ & $14.44 \pm 0.96$ & \\
        FedSA-rsLoRA    & $15.00 \pm 0.00$ & $15.56 \pm 0.96$ & \\
        \rowcolor{gray!30} SFed-LoRA & \textbf{16.11} $\pm$ \textbf{0.96} & \textbf{17.22} $\pm$ \textbf{1.92} & \\
        \bottomrule
    \end{tabular}
    \caption{Accuracy (\%) comparison on the GSM8K benchmark across ranks. Results are reported as mean $\pm$ standard deviation.}
    \label{accuracygsm8k}
\end{table}

\begin{table}[h]
    \centering
    \small
    \begin{tabular}{lccc}
        \toprule
        \textbf{Method} & \multicolumn{3}{c}{\textbf{MNLI-m Accuracy (\%)}} \\
        \cmidrule(lr){2-4}
        & Rank = 4 & Rank = 8 & Rank = 32 \\
        \midrule
        RoLoRA          & $57.72 \pm 1.51$  & $69.02 \pm 1.37$ & $69.26 \pm 0.89$ \\
        FedSA-LoRA      & $73.89 \pm 10.97$ & $81.31 \pm 5.16$ & $81.88 \pm 3.80$ \\
        FedSA-rsLoRA    & $81.24 \pm 4.08$  & $85.83 \pm 1.39$ & $87.23 \pm 1.65$ \\
        \rowcolor{gray!30} SFed-LoRA & \textbf{83.95} $\pm$ \textbf{2.59} & \textbf{86.51} $\pm$ \textbf{1.34} & \textbf{87.64} $\pm$ \textbf{1.83} \\
        \midrule
        & Rank = 128 & Rank = 512 & \\
        \midrule
        RoLoRA          & $69.74 \pm 0.64$ & $67.84 \pm 2.54$ & \\
        FedSA-LoRA      & $82.22 \pm 3.66$ & $81.25 \pm 5.63$ & \\
        FedSA-rsLoRA    & $87.13 \pm 0.74$ & $87.27 \pm 1.46$ & \\
        \rowcolor{gray!30} SFed-LoRA & \textbf{87.84} $\pm$ \textbf{0.94} & \textbf{87.72} $\pm$ \textbf{2.25} & \\
        \bottomrule
    \end{tabular}
    \caption{Accuracy (\%) comparison on the GLUE benchmark (MNLI-m). Results are reported as mean $\pm$ standard deviation.}
    \label{accuracymnli}
\end{table}
\paragraph{Results.}
\textbf{GSM8K:} Table~\ref{accuracygsm8k} presents the comparative performance on the mathematical reasoning task. While the overall accuracy remains modest due to the intrinsic difficulty of the task, SFed-LoRA demonstrates superior stability compared to the baselines. Specifically, as the rank increases to 512, the standard FedSA-LoRA suffers a performance drop to 14.44, whereas SFed-LoRA maintains a positive trajectory, peaking at 17.22. This represents a margin of 2.78 over the standard baseline and 1.66 over FedSA-rsLoRA. These results confirm that even in diverse task domains, the proposed scaling factor \(\gamma_z\) successfully mitigates the aggregation-induced degradation that hampers high-rank adaptation in other methods.

\textbf{GLUE:} Table~\ref{accuracymnli} details the results on the NLU benchmark. SFed-LoRA consistently outperforms competing methods across the entire rank spectrum, with the advantage becoming most pronounced at higher ranks. At Rank 512, SFed-LoRA achieves an accuracy of 87.72, exceeding the standard FedSA-LoRA (81.25) by a significant margin of 6.47. This substantial gap highlights the failure of the standard $\alpha/r$ scaling to sustain training dynamics under heterogeneous and different optimization conditions. Moreover, SFed-LoRA maintains a consistent lead over the stabilized FedSA-rsLoRA variant across all settings (e.g., +2.71 at Rank 4). 

Collectively, these findings validate that SFed-LoRA provides a robust and generalized solution for federated fine-tuning, independent of model architecture, optimizer choice, or data distribution.

\subsection{Additional Ablation Experiments}

To rigorously validate the robustness of SFed-LoRA under broader conditions, we conducted a series of supplementary ablation studies. Due to space constraints, the comprehensive results and detailed analyses are deferred to Appendix B, with the key findings summarized as follows:

\begin{itemize}
    \item \textbf{Gradient Norm Generalization:} We extended the gradient norm analysis to the GSM8K and GLUE benchmarks (using RoBERTa and AdamW). The results mirror the stability observed in the primary experiments, confirming that SFed-LoRA maintains consistent optimization dynamics across diverse model architectures, optimizers, and datasets.
    
    \item \textbf{Adapter Placement:} We evaluated the impact of applying adapters to all linear modules within the self-attention mechanism (i.e., including \(W_k\) and \(W_o\)), a common variation in LoRA implementations. The performance trends remained consistent with the primary configuration (targeting \(W_q\) and \(W_v\)), demonstrating SFed-LoRA's adaptability to different adapter placement strategies.
    
    \item \textbf{Scaling Factor Verification:} We benchmarked the proposed \(\gamma_z\) against alternative scaling heuristics. Empirical results indicate that deviating from the derived scaling law leads to suboptimal convergence or instability, thereby experimentally validating the optimality of the theoretical formulation \(\gamma_z \propto \sqrt{N/r}\).
\end{itemize}

\section{Conclusion}

In this work, we establish a rigorous theoretical foundation characterizing the critical interplay between the scaling factor, client aggregation size (\(N\)), and adapter rank (\(r\)) in federated fine-tuning. Addressing the gradient collapse observed in high-rank settings, we propose \textbf{SFed-LoRA}, which integrates the statistically derived scaling factor \(\gamma_z =\alpha \sqrt{N/r}\) to neutralize aggregation-induced variance. This formulation ensures consistent output moments and stable gradient norms, effectively decoupling optimization stability from the scale of the federated network. Extensive empirical evaluations across diverse tasks, model architectures, and heterogeneous data distributions confirm that SFed-LoRA significantly outperforms existing baselines by preventing stagnation and enabling rapid convergence. The proposed approach provides a robust solution for distributed LLM adaptation, and future work will focus on extending these stability principles to a broader range of federated large model architectures and complex aggregation protocols.
\bibliography{example_paper}
\bibliographystyle{icml2026}

\newpage
\appendix
\onecolumn
% \section{You \emph{can} have an appendix here.}
% ----------- Supplementary Content Starts Here -----------
\section{Proof of {\((N, r)\)-Federated-Stability} in SFed-LoRA}

The theoretical analysis of this paper relies on the FedSA-LoRA framework. Below, we derive the conditions for the scaling factor \(\gamma_z\) to ensure \((N, r)\)-federated-stabilized adapters, extending the approach of rsLoRA to the federated setting. The federated learning system has $N$ clients. For client $i$, its LoRA adapters are defined as \(\gamma_z B_i A_i\), where \(B_i \in \mathbb{R}^{d \times r}\), \(A_i \in \mathbb{R}^{r \times k}\) are initialized such that \(B_i^{(0)} = 0^{d \times r}\), and the entries of \(A_i^{(0)}\) are i.i.d.\ with mean 0 and variance \(\sigma_A^2\), independent of both client size \(N\) and rank \(r\). The scaling factor \(\gamma_z \in \mathbb{R}^+\) satisfies \(\gamma_z \to 0\) as \(r \to \infty\). We analyze the stability of adapter outputs and gradients under FedSA-LoRA update rules across \(N\) clients. We define the \((N, r)\)-federated-stabilized adapters as follows:

\begin{definition}\label{def-stability}
The federated learning system has $N$ clients, and each client $i$ has the adapter \(\gamma_z B_i A_i\). Each adapter \(\gamma_z B_i A_i\) is said to be \((N, r)\)-federated-stabilized if:
\begin{enumerate}
    \item If the inputs to the adapter are i.i.d.\ with the \(h\)-th moment scaling as \(\Theta_N(1)\) and \(\Theta_r(1)\) in each entry, then the \(h\)-th moment of the adapter outputs also scales as \(\Theta_N(1)\) and \(\Theta_r(1)\) in each entry.
    \item If the gradient of the loss with respect to the adapter outputs is \(\Theta_N(1)\) and \(\Theta_r(1)\) in each entry, then the loss gradients with respect to the adapter inputs are also \(\Theta_N(1)\) and \(\Theta_r(1)\) in each entry.
\end{enumerate}
A federated learning system is said to be \((N, r)\)-federated-stabilized if and only if all its adapters \(\{\gamma_z B_i A_i\}_{i=1}^{N}\) are \((N, r)\)-federated-stabilized, which ensures the stability of adapter outputs and gradients for large \(r\). 
\end{definition}

We aim to prove the following:

\begin{theorem}\label{theorem}
In a federated learning system, all adapters \(\{\gamma_z B_i A_i\}_{i=1}^{N}\) are \((N, r)\)-federated-stabilized
%For FedSA-LoRA adapters \(\gamma_z B_i A_i\), where \(B_i^{(n)}\) and \(A_i^{(n)}\) evolve according to the update rules, the adapters are \((N, r)\)-stabilized 
if and only if \(\gamma_z \in \Theta_z(\sqrt{\frac{N}{r}})\), where \(z = (N, r)\), $N$ denotes the client size, and \(r\) denotes the rank.
\end{theorem}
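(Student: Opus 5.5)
We follow the rsLoRA learning-trajectory argument of \citet{rslora} and add the FedSA-LoRA aggregation step to it.

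\textbf{Step 1: unroll the updates in closed form.} Consider SGD with learning rate $\eta$ under FedSA-LoRA. Client $i$ receives inputs $x_n$ and output gradients $v_n=\nabla_{h}\mathcal{L}$. Its local updates are
\[
B_i \leftarrow B_i - \eta\gamma_z v_n x_n^\top A_i^\top,\qquad A_i\leftarrow A_i-\eta\gamma_z B_i^\top v_n x_n^\top.
\]
At each round the $A_i$ are replaced by $\bar A=\frac1N\sum_i A_i$. We unroll these recursions from $B_i^{(0)}=0$ and $A_i^{(0)}$ i.i.d.\ with variance $\sigma_A^2$. Two facts about the leading-order terms follow.
\begin{itemize}
\item After the first aggregation, the shared random part is $\bar A^{(0)}=\frac1N\sum_i A_i^{(0)}$. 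Assuming independent client initializations, its entries have variance $\sigma_A^2/N$, so $\bar A^{(0)\top}\bar A^{(0)}$ and $\bar A^{(0)}\bar A^{(0)\top}$ acquire a factor $1/N$.
\item To leading order, $B_i^{(n)}=-\eta\gamma_z\sum_{m<n} v_m x_m^\top \bar A^{\top}$ plus corrections carrying higher powers of $\gamma_z$. These corrections are negligible because $\gamma_z\to0$. The corresponding corrections to $A$ are of order $\eta^2\gamma_z^2\,\bar A(\cdot)$.
\end{itemize}

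\textbf{Step 2: forward moments.} Substituting Step 1, the adapter output becomes
\[
\gamma_z B_i\bar A x=-\eta\gamma_z^2\sum_m v_m x_m^\top \bar A^\top\bar A x+\dots .
\]
For large $r$, the law of large numbers gives $\bar A^\top\bar A\approx \frac{r\sigma_A^2}{N}I$ entrywise on the relevant contractions. This holds because $\bar A^\top\bar A$ is a sum of $r$ outer products of rows, each row having covariance $(\sigma_A^2/N)I$. Hence the output scales as $\Theta(\gamma_z^2\, r/N)$ times quantities that are $\Theta_N(1),\Theta_r(1)$ by hypothesis. Taking $h$-th moments gives
\[
\mathbb{E}\big[(\gamma_z B_i\bar A x)^h\big]\in\Theta\big((\gamma_z^2 r/N)^h\big).
\]
This is $\Theta_N(1)$ and $\Theta_r(1)$ if and only if $\gamma_z^2 r/N=\Theta(1)$, i.e.\ $\gamma_z\in\Theta_z(\sqrt{N/r})$.

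\textbf{Step 3: backward moments.} The gradient with respect to the input is $\gamma_z \bar A^\top B_i^\top v$. Substituting $B_i$ yields $-\eta\gamma_z^2\,\bar A^\top\bar A\sum_m x_m v_m^\top v$, which is again of order $\gamma_z^2 r/N$. This gives the same necessary and sufficient condition, which establishes both directions.

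\textbf{Step 4: induction over the trajectory and across rounds.} The hypotheses of Definition~\ref{def-stability} require $x_n$ and $v_n$ to be $\Theta(1)$. The $x_n$ come from data and are therefore independent of $r$ and $N$. The $v_n$ depend on later adapters, so we use an induction on training steps and layers: if all adapters are stable up to step $n$, then $v_n$ is $\Theta(1)$, and Steps 2–3 show stability at step $n+1$. The ``only if'' direction comes from a counterexample: if $\gamma_z^2 r/N\to 0$ or $\to\infty$ along a subsequence, the first nonzero update, which occurs at $n=1$, already makes the output moments vanish or diverge.

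\textbf{Main obstacle.} The hardest step is justifying that the aggregated matrix keeps the $1/N$ variance reduction throughout training. After local steps the $A_i$ are no longer independent across clients, and the re-aggregation of updates $\frac1N\sum_i B_i^\top v x^\top$ mixes client-specific $B_i$ into the shared $\bar A$. We would first try to show that these update terms are lower order. They carry an extra factor $\eta\gamma_z\to0$, so to leading order $\bar A^{(t)}$ stays equal to $\bar A^{(0)}$ and the concentration $\bar A^\top\bar A\approx (r\sigma_A^2/N)I$ persists for all $t$. If that fails, we would assume that the client initializations are independent and treat the cross terms as bounded perturbations.
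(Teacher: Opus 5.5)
Your proposal follows the paper's proof essentially step for step. You unroll the FedSA-LoRA trajectory to get $A_i^{(n)}=\bar A(1+O(\gamma_z^2))$ and $B_i^{(n)}$ as an accumulated sum of $\eta\gamma_z$-scaled gradient terms. You use that $\bar A^\top\bar A$ has mean $\frac{r}{N}\sigma_A^2 I$: you argue via concentration over the $r$ rows, while the paper simply takes the expectation over independent client initializations. You then read off $\Theta((\gamma_z^2 r/N)^h)$ for the forward moments and $\Theta(\gamma_z^2 r/N)$ for the input gradient.

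One small correction to your Step 1: the very first $B$-update is taken before any aggregation, so $B_i^{(n)}$ contains the term $v_{i,0}x_{i,0}^\top (A_i^{(0)})^\top$ rather than $v_{i,0}x_{i,0}^\top\bar A^\top$. Since $\mathbb{E}[(A_i^{(0)})^\top\bar A]=\frac{r}{N}\sigma_A^2 I$ as well, this changes neither the order $\gamma_z^2 r/N$ nor the conclusion.
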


\subsection{Notation}

Before the proof of Theorem \ref{theorem}, we explain some notation used in Definition \ref{def-stability}. While computing the $n$-th SGD update for client $i$ with a learning rate \( \eta \), we have:
\begin{itemize}
    \item \( x_{i,n} \) is the input vector, and \( \gamma_z \) is the scaling factor; 
    \item Let \( B_i^{(n)} \in \mathbb{R}^{d \times r} \) and \( A_i^{(n)} \in \mathbb{R}^{r \times k} \) denote the adapter matrices after the $n$-th SGD update on input \( x_{i,n} \) for client \( i \), where \( B_i^{(n)} \) is initialized as \( B_i^{(0)} = 0^{d \times r} \), and the entries of \( A_i^{(0)} \) are i.i.d.\ with mean 0 and variance \( \sigma_A^2 \), independent of both client size \( N \) and rank \( r \);
    \item Let \( f_i(x_{i,n}) = \gamma_z B_i^{(n)} A_i^{(n)} x_{i,n} \) denote the adapter output for client \( i \);
    \item Let \( \mathcal{L}(f_i(x_{i,n})) \) represent the corresponding loss function. The gradient of the loss with respect to the adapter output is denoted by \( v_{i,n} = \nabla_{f_i(x_{i,n})} \mathcal{L}(f_i(x_{i,n})) \), reflecting the sensitivity of the loss to the output, while \( \nabla_{x_{i,n}} \mathcal{L} \) denotes the loss gradient with respect to the input \( x_{i,n} \);
    \item The $n$-th SGD update corresponds to the $t$-th round of communication in the federated learning training process, where $t = n$.
\end{itemize}

The symbol \(\Theta\) defines exact asymptotic behavior \citep{yang2020feature}. Given a sequence of scalar random variables \( c = \{c_l \in \mathbb{R}\}_{l=1}^{\infty} \), we write \( c = \Theta(l^{-a}) \) if there exist constants \( C_1, C_2 \) such that \( C_1 l^{-a} \leq |c| \leq C_2 l^{-a} \) for sufficiently large \( l \), almost surely. Given a sequence of random vectors \( x = \{x_l \in \mathbb{R}^l\}_{l=1}^{\infty} \), we say \( x \) has coordinates of size \( \Theta(l^{-a}) \) and write \( x = \Theta(l^{-a}) \) to mean the scalar random variable sequence \( \{\sqrt{\|x_l\|^2/l}\}_l \) is \( \Theta(l^{-a}) \). Similarly for the notations \( O(l^{-a}) \), \( \Omega(l^{-a}) \). We use the notations \( \Theta_\xi(l^{-a}) \), \( O_\xi(l^{-a}) \), \( \Omega_\xi(l^{-a}) \) if the hidden constants \( C_1, C_2 \) are allowed to depend on some object \( \xi \). 

\subsection{Proof of Theorem \ref{theorem}}

\begin{proof}
As shown in \citep{fedsa}, for all clients \(i\), we have: 
\begin{align}
\nabla_{B_i^{(n)}} \mathcal{L} &= \gamma_z v_{i,n} x_{i,n}^\top (A_i^{(n)})^\top, \label{eq:grad_B_i} \\
\nabla_{A_i^{(n)}} \mathcal{L} &= \gamma_z (B_i^{(n)})^\top v_{i,n} x_{i,n}^\top. \label{eq:grad_A_i}
\end{align}
Let \(\bar{A} = \frac{1}{N} \sum_{j=1}^N A_j^{(0)}\) denote the average initial matrix \(A\) across \(N\) clients. Below, we consider the federated learning process and derive the expressions for \(B_i^{(n)}\) and \(A_i^{(n)}\) of client $i$ when \(n = 0, 1, 2\).

\vspace{0.33em}\noindent\textbf{Derivation while computing the initial SGD update with \(n = 0\):}

\noindent-- Initially, we have \(B_i^{(0)} = 0^{d \times r}\), and \( A_i^{(0)} \sim \mathcal{N}(0, \sigma_A^2) \in \mathbb{R}^{r \times k} \).

\noindent-- Gradient:
  \begin{equation} 
  \nabla_{B_i^{(0)}} \mathcal{L} = \gamma_z v_{i,0} x_{i,0}^\top (A_i^{(0)})^\top,
  \end{equation}
  \begin{equation} 
  \nabla_{A_i^{(0)}} \mathcal{L} = \gamma_z (B_i^{(0)})^\top v_{i,0} x_{i,0}^\top = 0.
  \end{equation}
\noindent-- Update at client $i$:
  \begin{equation} 
  B_i^{(1)} = B_i^{(0)} - \eta \nabla_{B_i^{(0)}} \mathcal{L} = -\eta \gamma_z v_{i,0} x_{i,0}^\top (A_i^{(0)})^\top,
  \end{equation}
  \begin{equation} 
  A_i^{\text{pre}(1)} = A_i^{(0)} - \eta \nabla_{A_i^{(0)}} \mathcal{L} = A_i^{(0)}.
  \end{equation}
-- After aggregation on the central server and download to the client $i$, \(A_i^{(1)} = \bar{A}\).

\vspace{0.33em}\noindent\textbf{Derivation with \(n = 1\):}

\noindent-- Gradient:
  \begin{equation}  
  \nabla_{B_i^{(1)}} \mathcal{L} = \gamma_z v_{i,1} x_{i,1}^\top (A_i^{(1)})^\top = \gamma_z v_{i,1} x_{i,1}^\top (\bar{A})^\top,
  \end{equation}
  \begin{equation} 
  \nabla_{A_i^{(1)}} \mathcal{L} = \gamma_z (B_i^{(1)})^\top v_{i,1} x_{i,1}^\top.
  \end{equation}
-- Update at client $i$:
  \begin{equation}  
  B_i^{(2)} = B_i^{(1)} - \eta \nabla_{B_i^{(1)}} \mathcal{L} = -\eta \gamma_z v_{i,0} x_{i,0}^\top (A_i^{(0)})^\top - \eta \gamma_z v_{i,1} x_{i,1}^\top (\bar{A})^\top,
  \end{equation} 
  \begin{equation}  
  \begin{split}  
  A_i^{\text{pre}(2)} & = A_i^{(1)} - \eta \nabla_{A_i^{(1)}} \mathcal{L} = \bar{A} - \eta \gamma_z (B_i^{(1)})^\top v_{i,1} x_{i,1}^\top \\
  & = \bar{A} - \eta \gamma_z \left( -\eta \gamma_z v_{i,0} x_{i,0}^\top (A_i^{(0)})^\top \right)^\top v_{i,1} x_{i,1}^\top\\ 
  & = \bar{A} + A_i^{(0)} \left( \eta^2 \gamma_z^2 (x_{i,0} (v_{i,0}^\top v_{i,1}) x_{i,1}^\top) \right).\nonumber
  \end{split}    
  \end{equation}  
-- Like \citep{rslora}, the term \(\eta^2 \gamma_z^2 (x_{i,0} (v_{i,0}^\top v_{i,1}) x_{i,1}^\top)\) is of order \(\mathcal{O}_r(\gamma_z^2)\); 
%as \(\eta\), \(x_{i,0}\), \(v_{i,0}\), and \(v_{i,1}\) are \(\Theta_r(1)\) while \(\gamma_z \to 0\) as \(r \to \infty\), allowing simplification to focus on the dominant term \(\bar{A}\); 
thus, we have:
  \begin{equation}
      A_i^{\text{pre}(2)} = \bar{A} + A_i^{(0)} \mathcal{O}_r(\gamma_z^2).
  \end{equation}
-- After aggregation on the central server and download to the client $i$, \(A_i^{(2)} = \bar{A} \left(1 + \mathcal{O}_r(\gamma_z^2)\right)\).

\vspace{0.33em}\noindent\textbf{Derivation with \(n = 2\):}

\noindent-- Gradient:
  \begin{equation} 
  \nabla_{B_i^{(2)}} \mathcal{L} = \gamma_z v_{i,2} x_{i,2}^\top (A_i^{(2)})^\top = \gamma_z v_{i,2} x_{i,2}^\top (\bar{A} \left(1 + \mathcal{O}_r(\gamma_z^2)\right))^\top,
  \end{equation}
  \begin{equation} 
  \nabla_{A_i^{(2)}} \mathcal{L} = \gamma_z (B_i^{(2)})^\top v_{i,2} x_{i,2}^\top.
  \end{equation}
-- Update at client $i$:
  \begin{equation}
  B_i^{(3)} = B_i^{(2)} - \eta \nabla_{B_i^{(2)}} \mathcal{L} = -\eta \gamma_z \left[ v_{i,0} x_{i,0}^\top (A_i^{(0)})^\top + \sum_{s=1}^{2} v_{i,s} x_{i,s}^\top \left(1 + \mathcal{O}_r(\gamma_z^2)\right) \bar{A}^\top \right],
  \end{equation}  

  \begin{equation}  
  \begin{split}
  A_i^{\text{pre}(3)} & = A_i^{(2)} - \eta \nabla_{A_i^{(2)}} \mathcal{L} = \bar{A} \left(1 + \mathcal{O}_r(\gamma_z^2)\right) - \eta \gamma_z (B_i^{(2)})^\top v_{i,2} x_{i,2}^\top\\
  & = \bar{A} \left(1 + \mathcal{O}_r(\gamma_z^2)\right) - \eta \gamma_z \left( -\eta \gamma_z v_{i,0} x_{i,0}^\top (A_i^{(0)})^\top - \eta \gamma_z v_{i,1} x_{i,1}^\top (\bar{A})^\top \right)^\top v_{i,2} x_{i,2}^\top\\
 & = \bar{A} \left(1 + \mathcal{O}_r(\gamma_z^2)\right) + A_i^{(0)} \left( \eta^2 \gamma_z^2 (x_{i,0} (v_{i,0}^\top v_{i,2}) x_{i,2}^\top) \right) + \bar{A} \left( \eta^2 \gamma_z^2 (x_{i,1} (v_{i,1}^\top v_{i,2}) x_{i,2}^\top) \right).\nonumber
  \end{split}
  \end{equation}
-- Like \citep{rslora}, the terms \(\eta^2 \gamma_z^2 (x_{i,0} (v_{i,0}^\top v_{i,2}) x_{i,2}^\top)\) and \(\eta^2 \gamma_z^2 (x_{i,1} (v_{i,1}^\top v_{i,2}) x_{i,2}^\top)\) are of order \(\mathcal{O}_r(\gamma_z^2)\); thus, we have:
  \begin{equation}
      A_i^{\text{pre}(3)} = \bar{A} \left(1 + \mathcal{O}_r(\gamma_z^2)\right) + A_i^{(0)} \mathcal{O}_r(\gamma_z^2) + \bar{A} \mathcal{O}_r(\gamma_z^2).
  \end{equation}
-- After aggregation on the central server and download to the client $i$, \(A_i^{(3)} = \bar{A} \left(1 + \mathcal{O}_r(\gamma_z^2)\right)\).

%We can verify by mathematical induction that, for a general \(n\), the following holds: Let \(\bar{A} = \frac{1}{N} \sum_{j=1}^N A_j^{(0)}\) denote the average initial \(A\) matrix across \(N\) clients. 

We can verify by mathematical induction that, for \(n \geq 2\), the adapter matrices after the \(n\)-th update for client \(i\) are given by:
\begin{equation}
    B_i^{(n)} = -\eta \gamma_z \left( v_{i,0} x_{i,0}^\top (A_i^{(0)})^\top + \sum_{s=1}^{n-1} v_{i,s} x_{i,s}^\top \left(1 + \mathcal{O}_r(\gamma_z^2)\right) \bar{A}^\top \right),
\end{equation}
\begin{equation}
    A_i^{(n)} = \bar{A} \left(1 + \mathcal{O}_r(\gamma_z^2)\right).
\end{equation}

The adapter output for client \(i\) is:
\begin{equation}
    \gamma_z B_i^{(n)} A_i^{(n)} = -\gamma_z^2 \eta \left( v_{i,0} x_{i,0}^\top (A_i^{(0)})^\top \bar{A} + \sum_{s=1}^{n-1} v_{i,s} x_{i,s}^\top \bar{A}^\top \bar{A} \right) + \mathcal{O}_r(\gamma_z^3) \bar{A}^\top \bar{A}.
\end{equation}

Taking the expectation over the initialization \(A_j^{(0)}\) for all clients \(j\):
\[
\mathbb{E}_{A_j^{(0)}} \left( (A_i^{(0)})^\top \bar{A} \right) = \mathbb{E}_{A_j^{(0)}} \left( \bar{A}^\top \bar{A} \right) = \frac{r}{N} \sigma_A^2 I_{d \times d},
\]
it follows that
\begin{equation}
    \mathbb{E}_{A_j^{(0)}} \left( \gamma_z B_i^{(n)} A_i^{(n)} \right) = -\gamma_z^2 \frac{r}{N} \sigma_A^2 \eta \sum_{s=0}^{n-1} v_{i,s} x_{i,s}^\top + \mathcal{O}_r \left( \gamma_z^3 \frac{r}{N} \right).
\end{equation}

\textbf{(Backward pass:)} On a new input \(x_{i,n}\) for client \(i\), the gradient is
\begin{equation}
    \nabla_{x_{i,n}} \mathcal{L}(\gamma_z B_i^{(n)} A_i^{(n)} x_{i,n}) = -\gamma_z^2 \frac{r}{N} \sigma_A^2 \eta \sum_{s=0}^{n-1} x_{i,s} v_{i,s}^\top v_{i,n} + \mathcal{O}_r(\gamma_z^3 \frac{r}{N}) \in \Theta_z(\gamma_z^2 \frac{r}{N}).
\end{equation}

\textbf{(Forward pass:)} All inputs are i.i.d.\ with their moments satisfying \(\Theta_N(1)\) and \(\Theta_r(1)\). Thus, on a new input \(x_{i,n}\) for client \(i\), we have \(\mathbb{E}_x((x_{i,s}^\top x_{i,n})^h) \in \Theta_N(1)\) and \(\Theta_r(1)\). With \(\gamma_z \to 0\), the expectation yields
\begin{equation}
    \mathbb{E}_{x, A_j^{(0)}} \left( (\gamma_z B_i^{(n)} A_i^{(n)} x_{i,n})^h \right) = \left( -\gamma_z^2 \frac{r}{N} \sigma_A^2 \eta \right)^h \sum_{s=0}^{n-1} v_{i,s}^h \mathbb{E}_x((x_{i,s}^\top x_{i,n})^h) + \mathcal{O}_r \left( (\gamma_z^3 \frac{r}{N})^h \right) \in \Theta_z((\gamma_z^2 \frac{r}{N})^h).
\end{equation}

To prevent instability or collapse as \( r \to \infty \), the output moments and gradient norms must remain \(\Theta_N(1)\) and \(\Theta_r(1)\), which is satisfied if and only if \(\Theta_z((\gamma_z^2 \frac{r}{N})^h) = \Theta_N(1) \text{ and } \Theta_r(1)\), or equivalently
\[
\gamma_z \in \Theta_z(\sqrt{\frac{N}{r}}). 
\]
\end{proof}

\section{Ablations and Additional Experiments}

\subsection{Gradient Norm Generalization}
\begin{figure}[htb]
    \centering
    \includegraphics[width=1.0\textwidth]{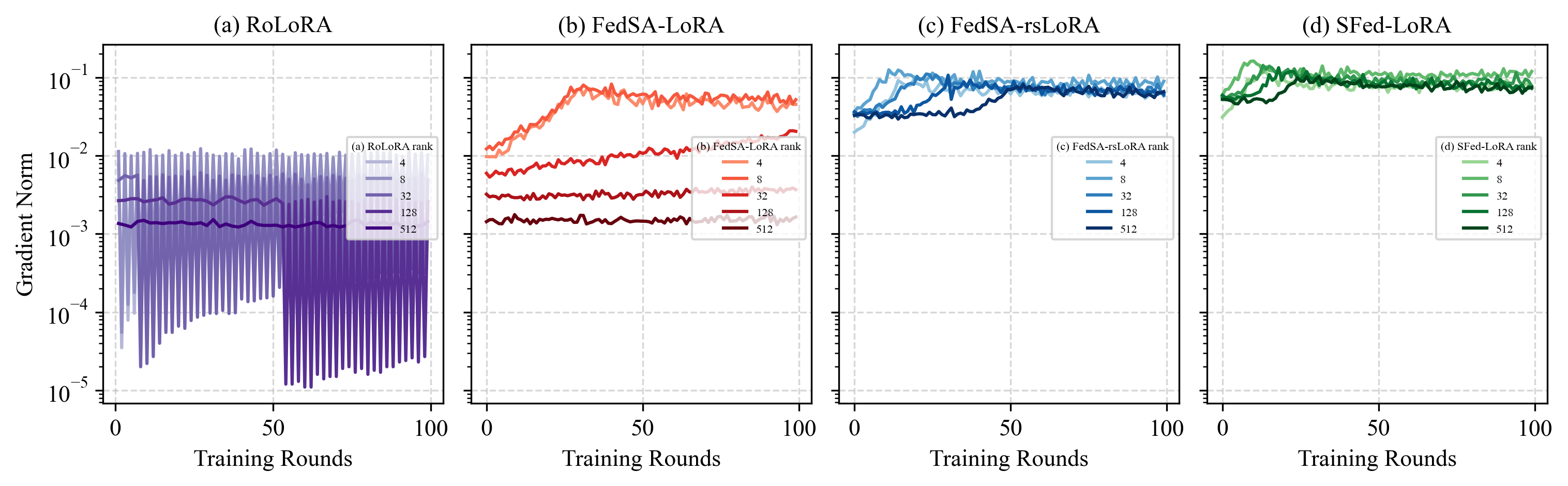}
    \caption{Evolution of average parameter gradient norms on the GSM8K dataset using the LLaMA2-7B model. The subplots display the training trajectories for (a) RoLoRA, (b) FedSA-LoRA, (c) FedSA-rsLoRA, and (d) SFed-LoRA across ranks \( r \in \{4, 8, 32, 128, 512\} \). Darker curves correspond to higher ranks.}
    \label{fig:gsm8k_grad}
\end{figure}

\begin{figure}[htb]
    \centering
    \includegraphics[width=1.0\textwidth]{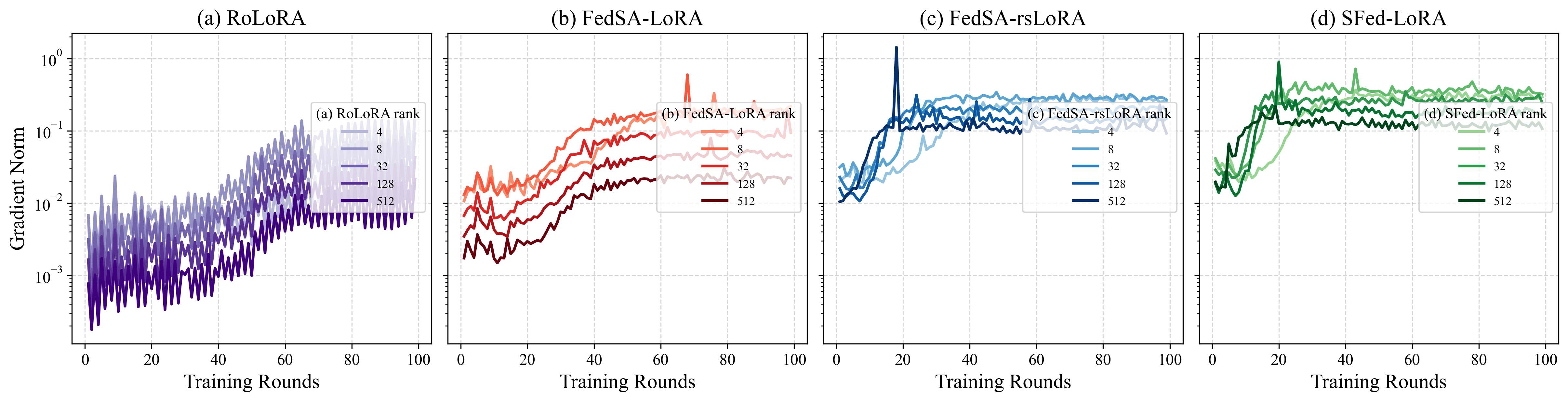}
    \caption{Evolution of average parameter gradient norms on the GLUE benchmark using the RoBERTa-large model with the AdamW optimizer. The subplots compare (a) RoLoRA, (b) FedSA-LoRA, (c) FedSA-rsLoRA, and (d) SFed-LoRA across ranks \( r \in \{4, 8, 32, 128, 512\} \). Darker curves correspond to higher ranks.}
    \label{fig:glue_grad}
\end{figure}
\paragraph{Configuration.}
To strictly evaluate the generalizability of SFed-LoRA beyond the primary setting, we extended the gradient norm analysis to cover diverse task domains and optimization landscapes. This section details the training dynamics for two distinct configurations: (1) \textbf{Task Diversity:} Using LLaMA2-7B on the GSM8K dataset with SGD, isolating the impact of the reasoning task; and (2) \textbf{Optimization Shift:} Using RoBERTa-large on the GLUE benchmark with the AdamW optimizer, testing robustness against adaptive moment estimation and encoder-only architectures. The rank parameters were swept across \( r \in \{4, 8, 32, 128, 512\} \), consistent with the main experiments.

\paragraph{Results on GSM8K (LLaMA2 + SGD).}
Figure~\ref{fig:gsm8k_grad} visualizes the gradient evolution during the fine-tuning of LLaMA2 on the mathematical reasoning task. The baseline behaviors highlight significant instability: RoLoRA (Plot a) exhibits severe high-frequency oscillations with gradient magnitudes periodically collapsing to negligible levels ($<10^{-4}$). FedSA-LoRA (Plot b) displays a clear rank-dependent stratification, where higher ranks (darker red lines) suffer from significantly attenuated gradient norms compared to lower ranks, indicating a suppression of feature learning capacity at large $r$. While FedSA-rsLoRA (Plot c) alleviates this disparity to some extent, distinct gaps between ranks persist. In contrast, SFed-LoRA (Plot d) demonstrates superior stability; the gradient trajectories for all ranks are tightly clustered within the same magnitude range ($10^{-1}$). This invariance confirms that the proposed scaling factor \(\gamma_z\) effectively neutralizes aggregation-induced variance, ensuring consistent optimization dynamics for LLaMA2 even in complex reasoning tasks.

\begin{figure}[htb]
    \centering
    \includegraphics[width=0.8\textwidth]{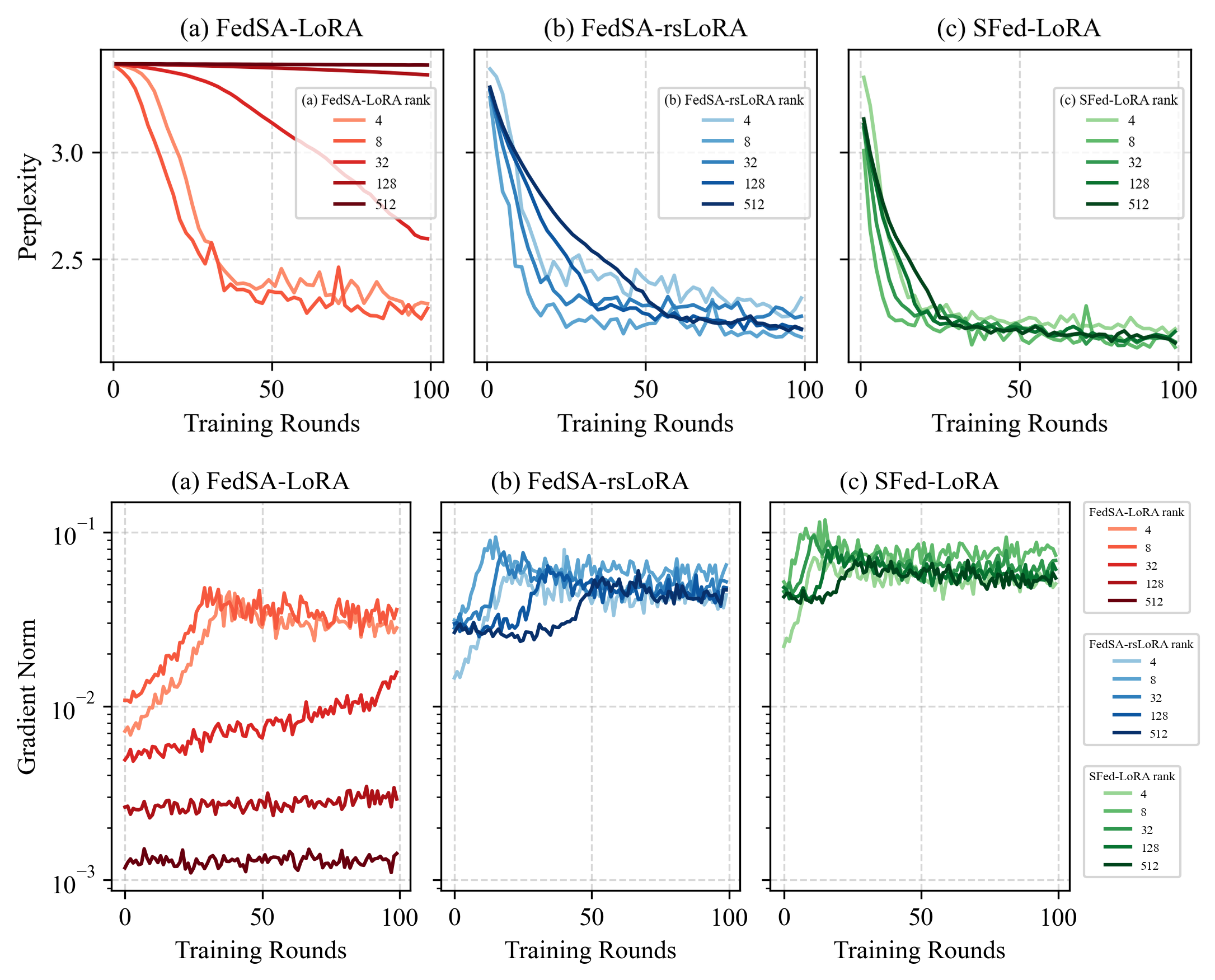}
    \caption{Perplexity (top row) and gradient norms (bottom row) of LLaMA-2-7B fine-tuned on the GSM8K dataset. Adapters are applied to all attention modules (\(\texttt{W}_{\text{q}}, \texttt{W}_{\text{k}}, \texttt{W}_{\text{v}}, \texttt{W}_{\text{o}}\)) across ranks \( r \in \{4, 8, 32, 128, 512\} \). The curves represent FedSA-LoRA (copper), FedSA-rsLoRA (blue), and SFed-LoRA (green), with darker shades indicating higher ranks.}
    \label{app-fig2}
\end{figure}

\paragraph{Results on GLUE (RoBERTa + AdamW).}
Figure~\ref{fig:glue_grad} extends the analysis to a BERT architecture (RoBERTa-large) optimized with AdamW, a setup characterized by adaptive learning rates that typically complicate gradient stability. Despite the change in optimizer, standard FedSA-LoRA (Plot b) continues to exhibit gradient collapse at higher ranks, evidenced by the distinct separation between the light (low rank) and dark (high rank) curves. RoLoRA (Plot a) remains volatile, struggling to maintain a stable optimization trajectory. SFed-LoRA (Plot d), however, maintains its robustness; the gradient norms across all ranks ($r=4$ to $r=512$) overlap significantly, showing minimal deviation. This result is particularly crucial as it verifies that the theoretical stability provided by \(\gamma_z = \alpha\sqrt{N/r}\) holds true regardless of the underlying optimizer (SGD vs. AdamW) or model architecture (Decoder-only vs. Encoder-only), confirming SFed-LoRA's broad generalizability in federated settings.

\subsection{Adapters in Attention Modules}

\paragraph{Configuration.}
In the primary experiments, the injection of LoRA modules was restricted to the \(\texttt{W}_{\text{q}}\) and \(\texttt{W}_{\text{v}}\) layers of the attention mechanism, following the standard configuration outlined by \citet{lora}. To investigate the adaptability of the proposed approach under increased parameter density, additional experiments integrated adapters across all linear modules within the self-attention mechanism, including \(\texttt{W}_{\text{q}}\), \(\texttt{W}_{\text{k}}\), \(\texttt{W}_{\text{v}}\), and \(\texttt{W}_{\text{o}}\). All other experimental settings, including the use of the LLaMA-2-7B model and GSM8K dataset, remained consistent with the primary experiments.

\paragraph{Results.}
Figure~\ref{app-fig2} illustrates the training dynamics when adapters are extended to all attention modules. The results align closely with the findings from the primary experiments, establishing a clear link between gradient stability and model convergence.
For FedSA-LoRA (copper plots), a severe gradient collapse is observed in the bottom row; as the rank increases to 512, gradient norms diminish significantly. This numerical instability directly precipitates the stagnation seen in the top row, where high-rank models fail to reduce perplexity.
FedSA-rsLoRA (blue plots) partially mitigates this issue, yet a visible stratification remains—high-rank gradients lag behind lower ones, resulting in slower convergence rates.
In contrast, SFed-LoRA (green plots) demonstrates remarkable robustness. The gradient norms (bottom row) remain tightly clustered and stable across the entire rank spectrum. Consequently, the perplexity curves (top row) show that high-rank adapters converge rapidly and effectively, often outperforming their lower-rank counterparts. These findings confirm that the scaling factor \(\gamma_z = \alpha\sqrt{N/r}\) successfully preserves optimization stability even when the adapter architecture is expanded.

\subsection{Evaluation of Alternative Scaling Factors}

\paragraph{Configuration.}
To rigorously verify the optimality of the proposed scaling factor \(\gamma_z \propto \sqrt{N/r}\), we conducted controlled ablation studies under an extreme high-rank setting (\(r = 2048\)). These experiments were designed to position \(\gamma_z\) against alternative scaling candidates with magnitudes distinctly smaller and larger than our theoretical derivation, thereby validating whether \(\gamma_z\) constitutes the optimal choice. Specifically, to assess the performance of a scaling factor smaller than the proposed \(\gamma_z\), we defined \(\gamma_{za}\) as:
\begin{equation}
\gamma_{za} = \frac{1}{\sqrt{N} \cdot \sqrt{r}} \label{eq:gamma1}
\end{equation}
Conversely, to evaluate the performance of a scaling factor larger than \(\gamma_z\), we introduced a setting defined as:
\begin{equation}
\gamma_{zb} = \frac{N^2}{\sqrt{r}} \label{eq:gamma2}
\end{equation}
All experiments were conducted using the LLaMA-2-7B model on the GSM8K dataset.

\begin{figure}[htb]
    \centering
    \includegraphics[width=0.55\textwidth]{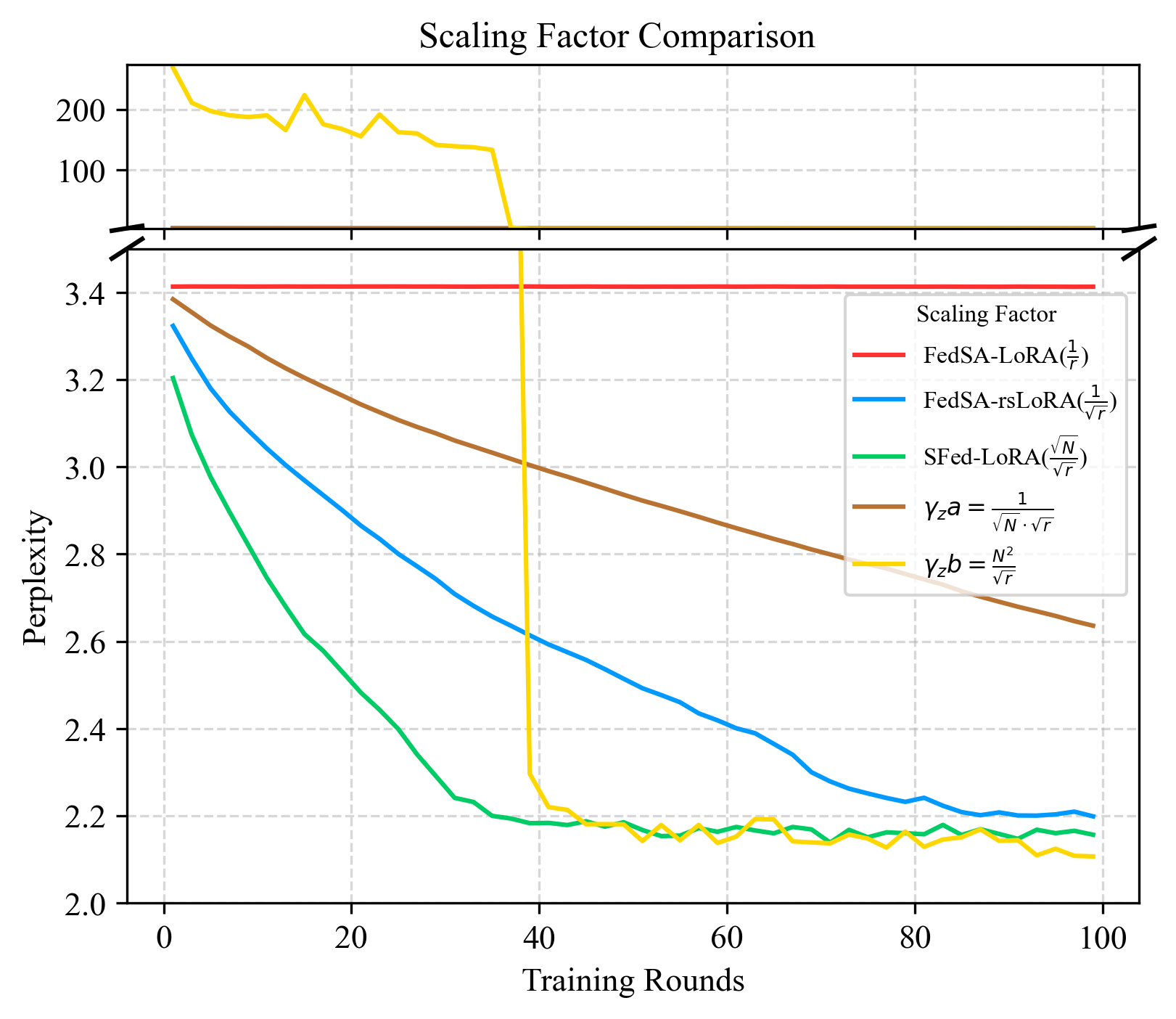}
    \caption{Perplexity trajectories of LLaMA-2-7B on the GSM8K dataset with a fixed high rank \( r = 2048 \). The plot compares the proposed SFed-LoRA (green) against FedSA-LoRA (red), FedSA-rsLoRA (blue), and alternative heuristics \(\gamma_{za}\) (brown) and \(\gamma_{zb}\) (yellow). Note the broken y-axis used to visualize the extreme initial perplexity of \(\gamma_{zb}\).}
    \label{app-fig3}
\end{figure}

\paragraph{Results.}
Figure~\ref{app-fig3} illustrates the convergence behaviors of the different scaling configurations, empirically confirming that \(\gamma_z\) represents the optimal scaling factor for federated fine-tuning. Among all evaluated candidates, the proposed SFed-LoRA (green curve) exhibits the most superior dynamics, achieving the most rapid convergence and the lowest final perplexity ($\approx 2.1$) with robust stability. In contrast, candidates deviating from this theoretical optimum yield suboptimal performance. The larger scaling factor \(\gamma_{zb}\) (yellow curve) results in severe initial volatility with perplexity exploding beyond 200, indicating that a factor exceeding \(\gamma_z\) compromises training stability. Conversely, the smaller scaling factor \(\gamma_{za}\) (brown curve), along with the rank-stabilized baseline (FedSA-rsLoRA, blue curve), exhibits a significantly slower convergence rate due to insufficient scaling magnitude. Meanwhile, the standard \(\alpha/r\) scaling (FedSA-LoRA, red curve) leads to stagnation. These findings conclusively demonstrate that the theoretically derived \(\gamma_z = \alpha\sqrt{N/r}\) effectively strikes the best balance, outperforming both larger and smaller alternatives to ensure the highest training efficiency and stability.

\begin{figure}[H]
    \centering
    \includegraphics[width=0.8\textwidth]{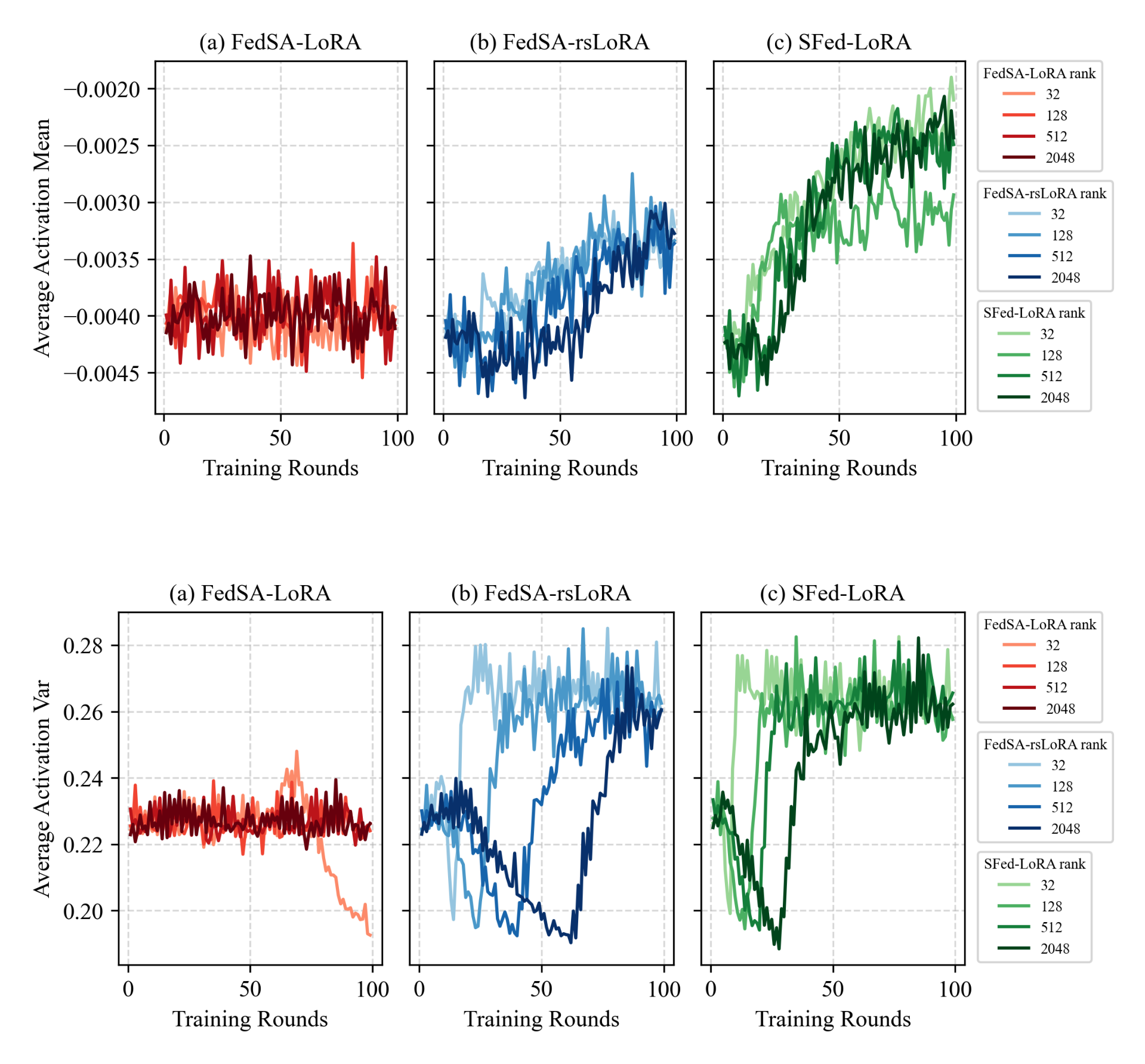}
    \caption{Mean (top row) and variance (bottom row) of post-adapter pre-LayerNorm activations for LLaMA-2-7B on the GSM8K dataset. The plots compare FedSA-LoRA (red), FedSA-rsLoRA (blue), and SFed-LoRA (green) across ranks \( r \in \{32, 128, 512, 2048\} \), with darker shades indicating higher ranks.}
    \label{app-fig4}
\end{figure}
\subsection{Activations}

\paragraph{Configuration.}
Guided by the theoretical implications regarding activation moments discussed in Theorem 4.1, we conducted a comprehensive evaluation of the averaged first two moments (mean and variance) of post-adapter, pre-LayerNorm activations. We compared the activation characteristics of FedSA-LoRA, FedSA-rsLoRA, and SFed-LoRA using the LLaMA-2-7B model on the GSM8K dataset. The experiments covered a broad spectrum of ranks \( r \in \{32, 128, 512, 2048\} \) to assess stability within federated learning settings.

\paragraph{Results.}
The trajectory of activation moments throughout training, as illustrated in Figure~\ref{app-fig4}, indicates that all three algorithms maintain relatively stable activation patterns without catastrophic collapse when averaged across adapter modules. This stability may be partially attributed to the architectural properties of the transformer, such as the normalizing effect of LayerNorm, or the smoothing inherent in averaging across layers. However, a distinct behavior is observable in SFed-LoRA (green curves): the higher ranks ($r=512, 2048$) exhibit a more dynamic evolution, with both mean and variance tending to level off later in the training process compared to the baselines. This delayed saturation suggests that SFed-LoRA facilitates sustained feature learning over a longer period, a consequence of the proposed rank stabilization that is particularly beneficial for high-rank training.
\end{document}